\newtheorem{theorem}{Theorem}
\newtheorem{lemma}{Lemma}
\newtheorem{proposition}{Proposition}
\newtheorem{proof}{Proof}
\newtheorem{corollary}{Corollary}
\newcommand{\R}{\mathbb{R}}
\newcommand{\T}{\hspace{-0.25ex}\top\hspace{-0.25ex}}
\begin{document}
\title{Direction Matters: On Influence-Preserving \\ Graph Summarization and Max-cut Principle \\for Directed Graphs}

\author[1]{Wenkai Xu\footnote{Contact at wenkaix@gatsby.ucl.ac.uk}}
\author[2]{Gang Niu}
\author[3,4]{Aaop Hyv\"{a}rinen}
\author[2,5]{Masashi Sugiyama}
\affil[1]{Gatsby Unit  of Computational Neuroscience}
\affil[2]{RIKEN AIP}
\affil[3]{INRIA-Saclay}
\affil[4]{University of Helsinki}
\affil[5]{The University of Tokyo}
\date{}
\maketitle

\begin{abstract}
Summarizing large-scaled directed graphs into small-scale representations is a useful but less studied problem setting. Conventional clustering approaches, which based on ``Min-Cut"-style criteria, compress both the vertices and edges of the graph into the communities, that lead to a loss of directed edge information. On the other hand, compressing the vertices while preserving the directed edge information provides a way to learn the small-scale representation of a directed graph.
The \textit{reconstruction error}, which measures the edge information preserved by the summarized graph, can be used to learn such representation.
Compared to the original graphs, the summarized graphs are easier to analyze and are capable of extracting group-level features which is useful for efficient interventions of population behavior. 
In this paper, we present a model, based on minimizing \textit{reconstruction error} with non-negative constraints, which relates to a ``Max-Cut" criterion that simultaneously identifies the compressed nodes and the directed compressed relations between these nodes.
A multiplicative update algorithm with column-wise normalization is proposed.
We further provide theoretical results on the identifiability of the model and on the convergence of the proposed algorithms. 
Experiments are conducted to demonstrate the accuracy and robustness of the proposed method. 
\end{abstract}

\section{Introduction}
In directed graphs, it is important to understand the influence between vertices, which is represented by the directed edges. Investigating the influence structure in graphs has become an evolving research field that attracts wide attention from scientific communities including social sciences \citep{tang2009social, li2018social, mehmood2013csi}, economics \citep{spirtes2005graphical, jackson2011overview}, ecological sciences \citep{pavlopoulos2011using, delmas2019analysing} and more. In large-scaled densely-connected directed graphs, finding an efficient way to compress vertices and summarizing the directed influence between vertices are not only useful to visualize complicated networks but also crucial to extract group-level features for further analysis such as profiling or intervention. 

Conventional graph clustering methods group the densely connected vertices into the same community on undirected graphs \citep{fortunato2010community,schaeffer2007graph,shi2000normalized}. Directed graph clustering is commonly based on symmetrized undirected graphs \citep{malliaros2013clustering}. However, the recovered communities do not preserve much of the edge information since the communities themselves are sparsely connected.
Hence, effective reconstruction of the original graph from the summarized graph is a meaningful task that enjoys applications in graph compression \citep{dhabu2013partition,dhulipala2016compressing}, graph sampling \citep{orbanz2017subsampling, leskovec2006sampling} and so on. 

For example, in a large-scaled social network, individual level connections are hard to analyze and contain a fair amount of noise. It is complicated to directly extract group-level features and interpret the influence structure of the graphs. In social network analysis, for instance, the Key Opinion Leaders (KOL) \citep{valente2007identifying,nisbet2009two} with common features may also share similar influence structure. Such information is important in terms of understanding the opinion diffusions within the network, as well as implementing interventions for various purposes such as marketing \citep{chaney2001opinion} or pooling \citep{zhou2009finding} \citep{thomson1998local}
. Moreover, extracting these features from the KOL within a group may also enable us to analyze the fairness of a certain process and perform de-bias actions when necessary.

\begin{wrapfigure}{r}{0pt}
\subfigure{\label{fig:illustration1}\includegraphics[scale=0.3]{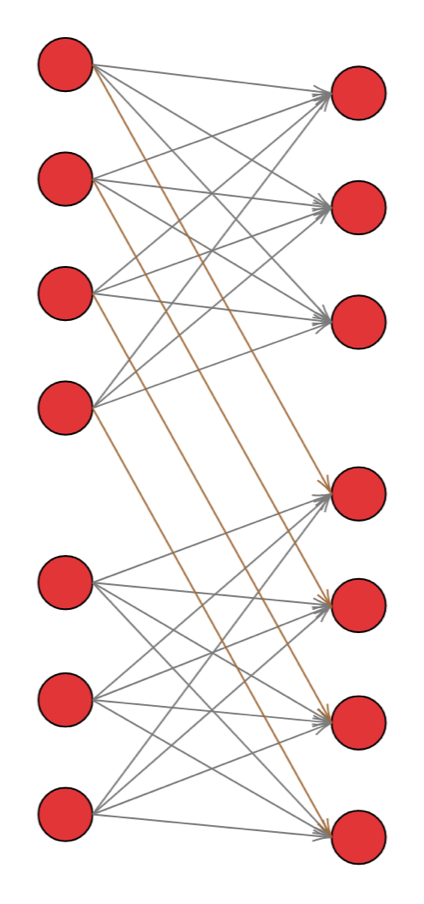}}\hspace{0.13cm}\subfigure[\label{fig:illustration2}]{\includegraphics[scale=0.3]{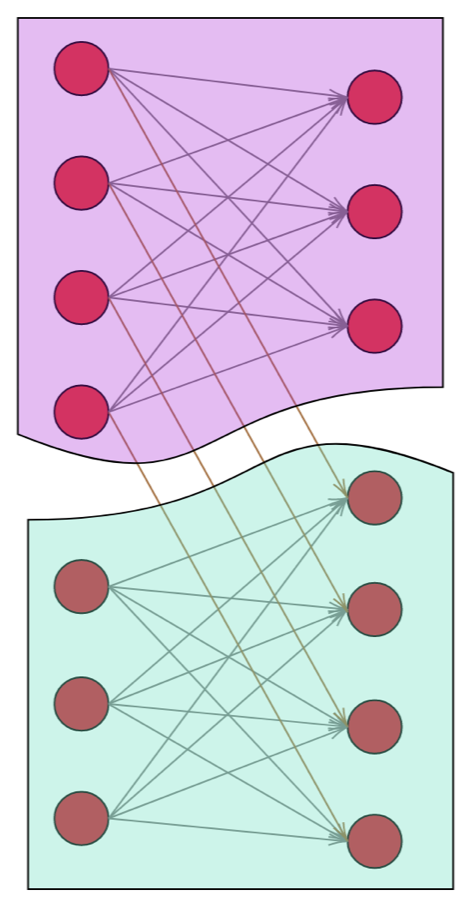}}\hspace{0.15cm}\subfigure[\label{fig:illustration3}]{\includegraphics[scale=0.3]{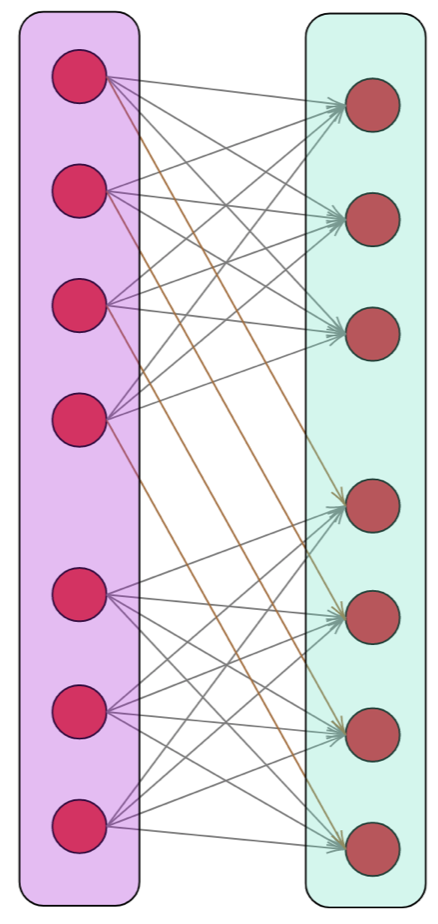}}
\caption{\label{fig:illustration}A Toy Example: a) a directed graph; b) a Min-Cut clustering; c) the desired summarization}
\end{wrapfigure}

Previous works have considered related problems in undirected graph settings \citep{shahaf2013information, navlakha2008graph}, which aim to define compressed nodes by preserving particular structures. Graph compression literature \citep{ maneth2015survey, fan2012query,dhulipala2016compressing} is also related, while the goal is to minimize the storage space, irrespective of preserving feature patterns of the graph. In addition, another line of related work, under the theme of influence maximization \citep{li2018influence}, studies directed influence of a set of vertices to the rest of the network. 

In our setting, we would like to extract sets of vertices, each becoming a compressed node, such that the influence between vertices are maximally preserved by the directed summarized graph.
Previous works such as flow-based graph summarization \citep{shi2016topic} or graph de-densification \citep{maccioni2016scalable} addressed a similar problem based on directed influence. Though these works deal with directed graphs, the directions of summarized nodes are defined from different domains so that the algorithms essentially apply to symmetrized undirected graphs. In this work, we present a novel criterion that is applicable to directed graphs, exploiting the asymmetric information of the directed edges and preserving the influence as much as possible. 

In directed networks, the summarization is harder as there are both the edge weights to be summarized as well as the edge directions. To effectively summarize directed graphs, we focus on the \textit{reconstruction error} from the summarized graph to the original graph. The directed graph summarization is more useful compared to the undirected case. For instance, with well-defined directed causal edges, the summarization can be helpful to approximate causal information between compressed nodes.   
Conventional clustering or dimensionality reduction methods utilizing the ``Max-Flow, Min-Cut"-style criteria, 
compressed vertices without considering to preserve the edge information.
These methods are unable to perform such summarization, illustrated in Figure.~\ref{fig:illustration}, since the objective is to minimize the connections between compressed nodes, which results in large reconstruction error thus undesired grouping.
Our proposed objective is closely related to but essentially different from such a scheme while we try to maximize the ``Cut" to preserve the directed edge information. Various discrete optimization schemes such as Dulmage-Mendelsohn Decomposition \citep{dulmage1958coverings} can also find a good summarization in a noiseless case, while they are less accurate and harder to implement when the noise level is high. On the other hand, our proposed model does not only work well in the noiseless case but is also more robust in the presence of noise.

This paper is organized as follows. In Section \ref{sec:prelim}, we introduce notations and the  problem setting. In Section \ref{sec:ips}, we present our learning objective and propose the Structured Non-negative Matrix Factorization (StNMF) algorithm to solve the problem. In Section \ref{sec:theory}, we provide theoretical results for reconstruction error, identifiability, and convergence of the algorithm. In Section \ref{sec:exp}, we experimentally demonstrate the usefulness of the proposed method and conclude in Section \ref{sec:conclusion}.

\section{Preliminaries and Problem Formulation} \label{sec:prelim}
In this work, we focus on simple directed graphs, which exclude self-loops and multiple edges. In this paper, we use ``graph" for referring to a directed graph when there is no ambiguity. 
A positive value in the adjacency matrix represents an out-edge. We may use a negative value to represent an in-edge. The inhibition type of directed relations, where an out-edge has a negative influence, are out of the scope of this paper. We continue by defining some preliminary concepts.
\begin{table}[t!]
    \centering
    \caption{Term Comparison between Original Graph and Summarized Graph}
    \label{tab:compare-table}
    \begin{tabular}{|c|c|c|}
    \hline
    original graph: $G$    & vertices: $x_i\in V$ & edges: $e_{ij}\in E$ \\
    \hline
    summarized graph: $H$ & compressed nodes: $c_{I}\in C$ & compressed relations: $r_{IJ}\in R$ \\
    \hline
    \end{tabular}
\end{table}

\subsection{Notations and Definitions}
Denote a directed graph of the node set $V$ and the directed edge set $E$ by $G=(V,E)$. Denote a summarized directed graph of the  compressed node set $C$ and the  directed relation set $R$. In this work, both $G$ and $H$ are simple. We distinguish terms in both graphs shown in Table.\ref{tab:compare-table}.
A \emph{node-compression} is a function $\phi_V:V\to C$ that assigns a vertex $x_i \in V$ to a compressed node $c_I  \in C$. In this work, $\phi_V$ is surjective.\footnote{We do not require all vertices belongs to a compressed node as opposed to the graph partition problem.} An \emph{edge-compression} is a function $\phi_E:E\to R$.
We say an edge-compression, $\phi_E$, is induced from a node-compression $\phi_V$ if $\phi_V(x_i)=\phi_V(x_i')$, $\phi_V(x_j)=\phi_V(x_j')$, implies $ \phi_E(e_{ij})=\phi_E(e_{i'j'})$, $\forall i,j,i',j'$, i.e., vertices assigned to the same compressed node admit the same compressed relation. Hence, we can write $\phi_E(e_{ij})=r_{IJ}$, $\forall\phi_V(x_i)=c_I, \phi_V(x_j)=c_J$.
In this work, we only consider the edge-compression induced from the node-compression. 

\begin{wrapfigure}{r}{0pt}
\includegraphics[width=6cm,height=4.5cm]{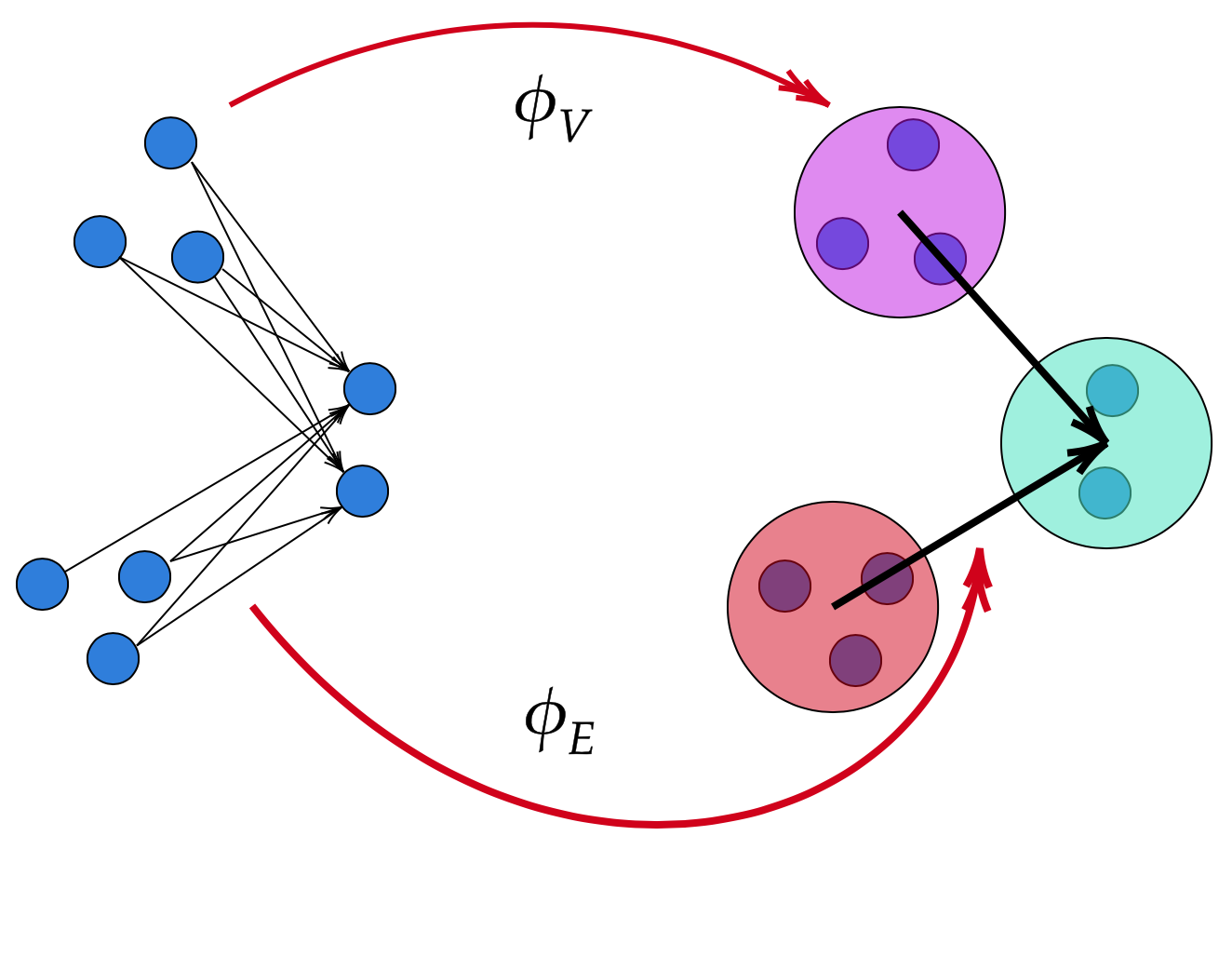}
\caption{
Summarization $\Phi$ based on the compressions $(\phi_V, \phi_E)$\label{fig:mapping}}
\end{wrapfigure}

A graph \textit{summarization}, based on the compressions $\phi_V$ and $\phi_E$, refers to the map $\Phi$ from the original directed graph $G$ to the summarized graph $H$, such that $\Phi(G, \phi_V,\phi_E)=H$ (Illustrate in Figure.\ref{fig:mapping}). 
A graph \textit{summarization} of size $k$, $\Phi_k$ is a constrained mapping where $|C|=k \leq |V|$. In practice, we would like $k\ll|V|$. When $k=|V|$, the summarization is trivial as the original graph always gives reconstruction error zero.

Denote $A \in \R^{n \times n}$ as the \textit{asymmetric} adjacency matrix of a directed graph such that $A_{ij}=1$ if there is a directed edge from $x_{i}$ to $x_{j}$, and $A_{ij}=0$ otherwise.
Denote $T\in\R^{n\times n}$ as the \textit{skew-symmetric} adjacency matrix of a directed graph where $T_{ij}=1$ and $T_{ji}=-1$ if there is
a directed edge from $x_{i}$ to $x_{j}$; $T_{ij}=0$ if there is
no edge connection between $x_{i}$ and $x_{j}$.
We say a directed graph to be connected if its undirected skeleton is connected.

\subsection{Influence Preserving Criteria}
Consider the performance measure of our graph summarization problem. The quality of a summarization can be measured by how much the directed edge information can be recovered from the summarized graph, via the reconstruction error: 
\begin{equation}\label{general ips loss}
L_{0}(G, \phi_V,\phi_E) = \sum_{I,J}{ \sum_{x_i\in c_I, x_j\in c_J} {\ell(e_{ij}, r_{IJ})}},
\end{equation}
where $\ell$ is some non-negative loss measure. 
We use the term influence to describe the information in directed edges. By choosing different loss $\ell$, the reconstruction error describes different types of influence-preserving criteria. We say a graph has an exact \emph{Influence Preserving Structure} (IPS) if the relevant reconstruction error $L_0=0$.  

Choosing $\ell_d(e_{ij}, r_{IJ})=1 - \mathbb{1}_{\mathrm{sign}(e_{ij})=\mathrm{sign}(r_{IJ})},\forall i\neq j$,\footnote{The absence edge does not have the same sign as a directed edge, i.e., $\mathrm{sign}(0)\neq \mathrm{sign}(p), \forall p\neq 0$.} we describe the reconstruction by recovering the directed edge direction. We say a graph summarization has an exact \textit{Directions Influence Preserving Structure} (D-IPS) if there exists a summarization such that the reconstruction error based on $\ell_d$ is $0$.

In a weighted graph, we may not only preserve the directional information of edges but also the weight information. Hence, we may choose the square loss between edges and compressed relations: $\ell_w(e_{ij}, r_{IJ})=(e_{ij}-r_{IJ})^2,\forall i\neq j$. We say a graph summarization has an exact \textit{Weights Influence Preserving Structure} (W-IPS) if there exists a summarization such that the reconstruction error based on $\ell_w$ is $0$. D-IPS is a special case for W-IPS. For a uniformly weighted graph, an exact W-IPS is equivalent to an exact D-IPS.

When a graph does not have an exact IPS, which is commonly observed in practice, we would like to simultaneously learn a node-compression, $\phi_V$, and an edge-compression, $\phi_E$, such that the corresponding summarization minimizes the relevant reconstruction error $L_0$.

\section{Learning the Influence-Preserving Summarization} \label{sec:ips}
In this section, we present the formulation of  influence-preserving summarization as a constrained supervised learning objective based on the reconstruction error. Our ``labels" can be seen as the compressed relations. We then present the algorithm to solve the constrained optimization problem.

\subsection{The Constrained Supervised Learning Objective}
We start from defining our learning objective based on the reconstruction loss with $\ell_w$ and derive the factorization model as our constrained optimization objective.  
\paragraph{The IPS-based Objective}
Our objective is to seek a graph summarization of size $k$, that minimizes the reconstruction error (which corresponds to $\ell_w$ for the rest of the paper). Denote node-compression $\phi_V$ by assignment matrix $U\in\{0, 1\}^{n\times k}$ where the $I^{th}$ column vector $u_{:I}\in \{0,1\}^{n\times 1}$ represents the elements in compressed node $c_I$, i.e., $u_{iI}=\mathbb{1}_{x_i\in C_I}$. Denote the edge-compression by relationship matrix $R\in \R^{k\times k}$ where $r_{IJ}$ represents the compressed relation from $c_I$ to $c_J$. Since the summarized graph is assumed to be simple, $R$ is an asymmetric adjacency matrix.
Given weighted asymmetric adjacency matrix $A$ and a graph summarization represented by $U$ and $R$, the objective based on loss measure $\ell_w( A_{ij}, r_{IJ}) = (A_{ij}-r_{IJ})^2\mathbb{1}_{x_i\in C_I}\mathbb{1}_{x_j\in C_J}$ can be written as: 
\begin{equation}\label{eq: fix-obj}
L_{1}(A, U, R) =\sum_{I,J}{ \sum_{i,j} {(A_{ij} - r_{IJ})^{2}u_{iI} u_{jJ} }}.
\end{equation}
However, without information on the number of compressed nodes allowed, the objective in Eq.~\eqref{eq: fix-obj} will take $k=|V|$ and the zero reconstruction error can always be achieved. To avoid this, we would like to impose a constraint on the size of the summarized graph to make $k\ll|V|$. With such a constraint, this objective may still identify a compressed node containing less relevant elements 
. To address this problem, we propose a normalized version of the objective in Eq.~\eqref{eq: fix-obj}:  

\begin{equation}\label{eq: normalize-obj}
L_{2}(A, U, R)  = \sum_{I,J}{\frac{1}{|C_I||C_J|} \sum_{i,j} {(A_{ij}-r_{IJ})^{2}u_{iI} u_{jJ} }}
\end{equation}
which corresponds to a normalized loss measure: 
$\ell_w(A_{ij},r_{IJ}) = \frac{(A_{ij}-r_{IJ})^2\mathbb{1}_{x_i\in C_I}\mathbb{1}_{x_j\in C_J}}{|C_I||C_J|}$.
We further assume the compressed node does not have overlaps, which corresponds to the orthogonality constraints, i.e., $u_{:I}^{\T}u_{:J}=0,\forall I\neq J$.

\begin{lemma}\label{lem: equivalent-factorization}
The objective in Eq.~\eqref{eq: normalize-obj} 
has the factorization form
\begin{equation}\label{eq: factorization-obj}
L_{2}(A,U,R) = \|A - URU^{\T} \|^2_\mathrm{F},\quad s.t.\quad U^{\T}U=I_{k}; R_{IJ}R_{JI}=0,\forall I,J.
\end{equation}
\end{lemma}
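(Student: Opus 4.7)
The plan is to show that the normalized reconstruction loss $L_2$ in Eq.~\eqref{eq: normalize-obj} can be rewritten cleanly as the Frobenius norm $\|A - URU^{\T}\|_{\mathrm{F}}^2$, provided the 0-1 assignment matrix in Eq.~\eqref{eq: normalize-obj} is rescaled so its columns have unit norm. Since the compressed nodes are disjoint, the columns $u_{:I}$ of the raw assignment matrix are already mutually orthogonal; dividing each column by $\sqrt{|C_I|}$ produces an orthonormal matrix, and it is this rescaled matrix that plays the role of $U$ in Eq.~\eqref{eq: factorization-obj}. The $R_{IJ}$ appearing in the factorization form then corresponds to the compressed relation in Eq.~\eqref{eq: normalize-obj} up to a factor of $\sqrt{|C_I||C_J|}$, which is precisely the factor that absorbs the normalized weight $1/(|C_I||C_J|)$ into an unweighted Frobenius sum.

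Once the rescaling is in hand, the argument is a direct computation. First, orthonormality combined with non-negativity of the assignment matrix forces each row of $U$ to have at most one non-zero entry; otherwise a strictly positive inner product between two columns would violate $U^{\T}U=I_k$. Hence the double sum collapses and
\[
(URU^{\T})_{ij} \;=\; \frac{R_{I(i),J(j)}}{\sqrt{|C_{I(i)}|\,|C_{J(j)}|}},
\]
where $I(i)$ and $J(j)$ are the unique compressed nodes containing $x_i$ and $x_j$. Regrouping $\sum_{i,j}(A_{ij}-(URU^{\T})_{ij})^2$ block-by-block and applying the rescaling from the previous paragraph matches the result term-by-term with the expansion of $L_2$. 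The constraint $R_{IJ}R_{JI}=0$ is immediate from the simplicity of the summarized graph $H$: $R$ is the asymmetric adjacency matrix introduced in Section~\ref{sec:prelim}, and any simultaneously nonzero pair $R_{IJ}, R_{JI}$ would contradict the simple-graph assumption.

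The main obstacle will be tracking the scaling carefully: the factor $1/(|C_I||C_J|)$ appearing in Eq.~\eqref{eq: normalize-obj}, the $\sqrt{|C_I||C_J|}$ absorbed into the rescaled $R$, and the columnwise normalization $1/\sqrt{|C_I|}$ of $U$ must all compose consistently across the block-wise expansion so that the coefficients come out exactly right. Handling these three rescalings simultaneously is the bulk of the bookkeeping; once it is done cleanly, both the equality $\|A-URU^{\T}\|_{\mathrm{F}}^2 = L_2(A,U,R)$ and the asymmetric-adjacency constraint on $R$ follow immediately.
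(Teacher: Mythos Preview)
Your route and the paper's are close in spirit but differ in execution. The paper expands the square $(A_{ij}-r_{IJ})^2$ into three pieces and identifies each with a trace---$\mathrm{tr}(A^{\T}A)$, $-2\,\mathrm{tr}(A^{\T}URU^{\T})$, $\mathrm{tr}(R^{\T}R)$---using $U^{\T}U=I_k$ only in the last term. You instead compute $(URU^{\T})_{ij}$ entrywise, exploit that non-negativity plus orthonormality forces a single nonzero per row, and compare block-by-block. Both are elementary linear-algebra verifications relying on the same column normalization; your version is a bit more constructive, while the trace expansion is shorter.

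One bookkeeping caution: your claim that rescaling $R_{IJ}\to r_{IJ}\sqrt{|C_I||C_J|}$ ``absorbs the normalized weight $1/(|C_I||C_J|)$'' is not quite right as stated. With $U$ column-normalized and that rescaling, $(URU^{\T})_{ij}=r_{I(i),J(j)}$, so $\|A-URU^{\T}\|_{\mathrm F}^2=\sum_{I,J}\sum_{i\in C_I,\,j\in C_J}(A_{ij}-r_{IJ})^2$, which is the \emph{unnormalized} objective~\eqref{eq: fix-obj}, not~\eqref{eq: normalize-obj}: the weight $1/(|C_I||C_J|)$ multiplies the full squared residual, including $A_{ij}^2$, and no rescaling of $r$ alone can remove it. The paper's proof has the same looseness---it treats the $A_{ij}^2$ term as the constant $\mathrm{tr}(A^{\T}A)$ while simultaneously invoking the normalized setting for the $r_{IJ}^2$ term---so this is an imprecision in the lemma's framing rather than a flaw specific to your argument.
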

The proof proceeds by basic linear algebra, which can be found in Appendix \ref{sec:additional_proofs}.
Note that $R$ is an asymmetric adjacency matrix representing the compressed relations in the summarized graph. Since the summarized graph is assumed to be simple, $r_{IJ}$ and $r_{JI}$ have at most one non-zero $\forall I\neq J$ and $R_{II}=0,\forall I$, which imply the constraint $R_{IJ}R_{JI}=0,\forall I,J$.

\paragraph{Continuous Relaxation}
The normalized objective in Eq.~\eqref{eq: normalize-obj} is an NP-hard discrete problem, which is similar to the discrete cluster assignment problem. Using continuous relaxation proposed in \cite{shi2000normalized} and \cite{meilua2007clustering} is a way to approximately solve such a problem. Here, we propose a continuous relaxation for the factorization model in Eq.~\eqref{eq: normalize-obj}:
\begin{equation}\label{eq: continuous-obj}
L_{3}(A, U, R) = \|A - URU^{\T} \|^2_\mathrm{F} \quad s.t. \quad U^{\T}U=I_{k}; {R_{IJ}R_{JI}=0,\forall I,J},
\end{equation} 
where $U\in \R^{n\times k}$ and $R\in \R^{k\times k}$.

Due to the constraint on $R$, it is not easy to solve such constrained objective as we do not assume structures on summarized graph. This issue can be alleviated by modeling  structure via the skew-symmetric adjacency matrix $T$. The corresponding factorization becomes: 
\begin{equation}\label{eq: skew-factorization}
L_{4}(T; U, S) =  \|T - USU^{\T} \|^2_\mathrm{F}\quad s.t. \quad U^{\T}U=I_{k},
\end{equation}
where $U \in \R^{n \times k}$ and $S \in \R^{k \times k}$ is skew-symmetric. 
Eq.~\eqref{eq: skew-factorization} exploits skew-symmetric structure and is easier to solve.
We show in Theorem \ref{Thm: equivalence-decomposition} below that the objectives in Eq.~\eqref{eq: continuous-obj} and  Eq.~\eqref{eq: skew-factorization} admit the same solution, up to permutation in the exact D-IPS case. Despite the fact that the asymmetric matrix $A$ is useful for deriving identifiability result, using the skew-symmetric matrix $T$ is easier to solve and more robust in noisy cases as the model explicitly penalizes the reversely directed noise edges. 
In the rest of the paper, we will use the skew-symmetric matrix $T$ to represent the graphs.

\paragraph{Positive Values Identifies the Compressed Node}

With the factorization model in Eq.~\eqref{eq: skew-factorization}, we further show in Theorem \ref{Thm: exact IPS-Identification} of Section \ref{subsec: identi-analysis}, that under the exact D-IPS, the factor $U$ is non-negative and positive entries correctly identify the compressed nodes. Hence, we propose a non-negative constrained factorization model for better identification in the presence of noise:
\begin{equation}\label{eq: nn-skew-factorization}
L_{5}(T; U, S) = \|T - USU^{\T} \|^2_\mathrm{F} \quad s.t. \quad U\geq 0,U^{\T}U=I_{k}.
\end{equation}

\subsection{Learning Algorithms}

With the non-negative and orthogonal constraints on $U$, the model in Eq.~\eqref{eq: nn-skew-factorization} 
can be written as a regularized version of the orthogonality constraint non-negative matrix tri-factorization:
\begin{equation}\label{eq:stnmf-obj}
L_{6}(T; U, S, \Lambda)=\|T-USU^{\T}\|^{2} + \mathrm{tr}(\Lambda(U^{\T}U-I)) \quad s.t. \quad U\geq0,
\end{equation}
where the  regularization parameter $\Lambda$ is a symmetric matrix. 
It is also related to Semi Non-negative Matrix Factorization 
since $T$ itself is not a non-negative matrix.

This optimization objective can be solved by gradient methods with projection to the
Stiefel manifold, as discussed in \cite{hirayama2016characterizing} and \cite{edelman1998geometry}.
However, the projection based algorithm is very sensitive to initialization.
Instead, we propose a multiplicative update scheme: $U\leftarrow U\odot\frac{[\nabla_{U}L]_{+}}{[\nabla_{U}L]_{-}}$ modified
from \cite{ding2010convex, lee2001algorithms} and \cite{ding2006orthogonal}. We use $X_{+}$ and $X_{-}$ to denote the positive and negative
parts of matrix $X$ respectively. The modification does not only allow the  imposition of the specific skew-symmetric structure
of $S$ and orthogonal constraint but also gives more stable results. This leads to our proposed Structured Non-negative Matrix Factorization (StNMF) in Algorithm~\ref{alg:MUR-fix-lambda}. 
\begin{algorithm}
\caption{StNMF (constant regularization parameter)\label{alg:MUR-fix-lambda}}
\textbf{Input:} Skew-symmetric $T$, size constraint $k$, regularizer $\Lambda$
\begin{algorithmic}[1] 
\STATE Initialize: skew-symmetric $S$ and non-negative matrix $U$
\WHILE{not converged}
\STATE Update $U\leftarrow U\odot\frac{[TUS^{\T}]_{+}+U[S^{\T}U^{\T}US]_{-}}{[TUS^{\T}]_{-}+U([S^{\T}U^{\T}US]_{+}+\Lambda)}$
\STATE Update $S\leftarrow S\odot\frac{U^{\T}TU}{U^{\T}USU^{\T}U}$
\ENDWHILE
\STATE \textbf{return} $U$,$S$
\end{algorithmic}
\end{algorithm}

The non-negative matrix $U$ can be effectively initialized via non-negative
SVD \citep{boutsidis2008svd}. $S$ can be initialized by any $k\times k$ skew-symmetric matrix. For instance, when $k=2$, we set initial $S=\begin{pmatrix}0 & 1\\
-1 & 0
\end{pmatrix}$. The algorithm exploits zero locking
properties in the multiplicative update scheme so that the desired structure
of $S$ is preserved throughout the updates with such initialization. In the fixed regularization scheme, a different choice of $\Lambda$ results in a different local optimal solution and it depends on users' preference. For instance, if the user would like to have a strictly non-overlapping compressed node set, one may set the magnitude of off-diagonal terms to be large to emphasize orthogonality; if the user is interested in the weight assignment between compressed nodes, the diagonal terms may be set relatively larger to ensure the unit length vector. However, the discussion of such a topic is out of the scope in this paper. 

It is important to note that the directed compressed relations
can be read off from $S$, which represents the skew-symmetric (weighted) adjacency matrix of the summarized graph $H=(C,R)$.
Hence, our model is able to simultaneously identify the node-compression and the edge-compression, thus the summarized graph $H$.

We can also optimize $\Lambda$, using the Karush-Kuhn-Tacker (KKT) complementary condition \citep{kuhn1951nonlinear} and set: $\Lambda = U^{\T}Q - P = U^{\T}Q_{+}+P_{-}-U^{\T}Q_{-}-P_{+}$,
where $ P=S^{\T}U^{\T}US$ and $Q=T^{\T}US$. The derivation can be found in Section \ref{subsec: algo-analysis}.
In addition, the Algorithm \ref{alg:MUR-fix-lambda}, does not guarantee a tightly bounded norm of the column vectors in $U$. When adaptive regularizer is used, the optimization trajectory is not monotonic non-increasing. Hence we impose a column-wise normalization step in Algorithm \ref{alg:adaptive-MUR}  to alleviate this problem.  

\begin{algorithm}
\caption{StNMF (the adaptive version)\label{alg:adaptive-MUR}}
\textbf{Input:} Skew-symmetric $T$, size constraint $k$
\begin{algorithmic}[1] 
\STATE Initialize: skew-symmetric $S$ and non-negative matrix $U$
\WHILE{not converged}
\STATE Update $U\leftarrow U\odot\frac{[TUS^{\T}]_{+}+U[S^{\T}U^{\T}US]_{-}}{UU^{\T}[TUS^{\T}]_{+}+U[S^{\T}U^{\T}US]_{-}}$
\STATE Normalize column of $U$: $U\leftarrow U D_{U^{\T}U}^{-\frac{1}{2}}$
\STATE Update $S\leftarrow {U^{\T}TU}$
\ENDWHILE
\STATE \textbf{return} $U$,$S$
\end{algorithmic}
\end{algorithm}

\section{Theoretical Analysis}\label{sec:theory}
In this section, we present theoretical results for identification of non-negative models and analysis of Structured Non-negative Matrix Factorization (StNMF). 

\subsection{Identifiability Analysis}\label{subsec: identi-analysis}
\begin{theorem}\label{Thm: exact IPS-Identification}(exact D-IPS Identification)
Let $A$ be an asymmetric adjacency matrix of a directed graph with the exact D-IPS with $k$ compressed nodes. Assume that each submatrix between compressed nodes has distinct leading singular values with geometric multiplicity one. The optimization problem in Eq.~\eqref{eq: continuous-obj} has a unique solution $U \in \R^{n\times k}$ such that $U\geq 0$ and the positive part of each column vectors in $U$ identifies compressed nodes.
\end{theorem}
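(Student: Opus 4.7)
The plan is to show that any feasible $(U,R)$ of Eq.~\eqref{eq: continuous-obj} achieving zero reconstruction error under the non-negativity constraint $U\geq 0$ must, up to a column permutation, coincide with the indicator basis of the true compressed nodes.

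First, I would verify existence by exhibiting the natural candidate: set $u^*_{iI}=\mathbb{1}_{x_i\in c_I}/\sqrt{|c_I|}$ and $r^*_{IJ}=\sqrt{|c_I|\,|c_J|}$ whenever $c_I\to c_J$ is a compressed relation and zero otherwise. Then $(U^*)^{\T}U^*=I_k$, $r^*_{IJ}r^*_{JI}=0$ (because the summarized graph $H$ is simple), $U^*\geq 0$, and a block-by-block check gives $U^*R^*(U^*)^{\T}=A$. So the infimum of Eq.~\eqref{eq: continuous-obj} equals zero and is attained by a non-negative $U$.

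Second, I would pin down the column space of any feasible $(U,R)$ attaining zero loss. Since $U^{\T}U=I_k$ and $A=URU^{\T}$, one gets $R=U^{\T}AU$ and $A=(UU^{\T})A(UU^{\T})$, so both $\mathrm{col}(A)$ and $\mathrm{row}(A)$ lie in $\mathrm{range}(U)$. Under exact D-IPS, $A$ is block-constant, so each column (resp.\ row) of $A$ is a linear combination of the indicators $\mathbb{1}_{c_I}$, placing $\mathrm{col}(A)$ and $\mathrm{row}(A)$ inside $W:=\mathrm{span}\{\mathbb{1}_{c_1},\ldots,\mathbb{1}_{c_k}\}$. The hypothesis that each non-zero block of $A$ has a distinct, simple leading singular value is used here to conclude that the compressed-level adjacency has rank $k$, so that $\mathrm{col}(A)+\mathrm{row}(A)=W$. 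Comparing dimensions then forces $\mathrm{range}(U)=W$.

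Third, I would exploit the rigidity of non-negative orthonormal bases of $W$. Writing $u_{:I}=\sum_J \alpha_{IJ}\,\mathbb{1}_{c_J}/\sqrt{|c_J|}$ with $\alpha\in\R^{k\times k}$, non-negativity of $U$ yields $\alpha\geq 0$, while orthonormality of $U$ together with orthonormality of the indicator basis of $W$ yields $\alpha\alpha^{\T}=I_k$. But a non-negative orthogonal matrix must be a permutation matrix: its columns are non-negative, pairwise orthogonal, and of unit $\ell_2$ norm, so they have pairwise disjoint supports, and $\sum_I|\mathrm{supp}(\alpha_{:I})|\leq k$ combined with $|\mathrm{supp}(\alpha_{:I})|\geq 1$ forces each support to be a singleton with entry $1$. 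Hence $U$ equals $U^*$ up to a column permutation, and the positive support of $u_{:I}$ is exactly the compressed node $c_{\sigma(I)}$ for some permutation $\sigma$, which is the asserted identification.

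The main obstacle is the second step: translating the distinct-leading-singular-value hypothesis on each block into the required rank-$k$ statement on the compressed-level adjacency, and thereby excluding degenerate configurations (isolated compressed nodes, or pairs of blocks whose row and column spaces collapse into a common subspace) in which some $\mathbb{1}_{c_I}$ would fail to lie in $\mathrm{col}(A)+\mathrm{row}(A)$; in such cases $\mathrm{range}(U)$ could extend outside $W$, creating room for extra non-negative orthonormal directions and breaking the identifiability conclusion.
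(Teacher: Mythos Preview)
Your route is quite different from the paper's. The paper first optimizes out $R$ to reduce Eq.~\eqref{eq: continuous-obj} to $\max_{U^\T U=I_k}\sum_{I',J'}(u_{:I'}^\T A\,u_{:J'})^2$, decomposes each $u_{:I'}$ block-wise along the true partition, bounds each bilinear term $w^{IJ}_{I'J'}=(u^I_{II'})^\T\tilde A_{IJ}\,u^J_{JJ'}$ by the leading singular value $\lambda_{IJ}$ of the block $\tilde A_{IJ}$, and then applies Perron--Frobenius (Proposition~\ref{(Perron-Frobenius-Theorem)} via Theorem~\ref{Thm:Bipartitie-Identification}) to each primitive block to conclude that the optimizing vectors $u^I_{II}$ are strictly positive. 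Non-negativity of $U$ is thus a \emph{conclusion} of the argument, not an input.

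This is precisely where your plan has a genuine gap. You impose $U\geq 0$ from the outset and then argue uniqueness inside that class via the rigidity of non-negative orthogonal matrices. But Eq.~\eqref{eq: continuous-obj} carries no sign constraint; indeed the paper invokes Theorem~\ref{Thm: exact IPS-Identification} precisely to \emph{motivate} adding $U\geq 0$ later in Eq.~\eqref{eq: nn-skew-factorization}. Under your own premises (optimum equals zero and $\mathrm{range}(U)=W$), every orthonormal basis of $W$, with $R=U^\T AU$, is already optimal for the unconstrained-in-sign problem, so uniqueness fails and non-negativity is assumed rather than derived. The Perron--Frobenius step in the paper is not a technicality replaceable by a dimension count: it is the mechanism that singles out the positive Perron vector inside each block's one-dimensional leading singular subspace, delivering non-negativity and uniqueness (up to permutation) simultaneously. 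A secondary issue is that your first two steps require $A$ to be block-\emph{constant} so that $A=U^*R^*(U^*)^\T$ exactly and $\mathrm{col}(A)+\mathrm{row}(A)\subseteq W$; the paper's proof treats the blocks $\tilde A_{IJ}$ as arbitrary positive matrices (this is why the simple-leading-singular-value hypothesis is present at all), and in that regime the objective minimum is not zero and neither of your containments holds.
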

The proof technique extends on the $k=2$ case in Theorem \ref{Thm:Bipartitie-Identification}, which applies Perron-Frobenius Theorem on rearranged block matrix. Details can be found in Appendix \ref{sec:additional_proofs}. 
For graphs with more than one connected components to be determined, the most strongly connected component will be identified first and the consecutive components can be identified via deflation methods discussed in \cite{hyvarinen2016orthogonal,hirayama2016characterizing}, which is out of the scope of this paper.

\begin{lemma}\label{lem: block-form}
If a directed graph, with asymmetric adjacency matrix $A$, has the exact IPS, $A$ can be divided into block submatrix according to compressed nodes, such that: 1) If a block $\tilde{A}_{IJ}$ is non-zero, its block-wise transpose $\tilde{A}_{JI}$ zero matrix; 2) The diagonal blocks are zero-matrices.
\end{lemma}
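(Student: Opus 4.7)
The plan is to unpack the definition of exact IPS and translate the summarized-graph constraints (simple directed summary: no self-loops, at most one direction between two compressed nodes) into block-wise statements about $A$.

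First, I would relabel the vertices so that those belonging to the same compressed node are contiguous, which writes $A$ as a block matrix with blocks $\tilde A_{IJ}$ indexed by compressed nodes $c_I, c_J \in C$. By hypothesis, there exists a node-compression $\phi_V$ (encoded by $U$) with induced edge-compression $\phi_E$ (encoded by $R$) achieving zero reconstruction error; that is, $\ell(A_{ij},R_{IJ})=0$ for every $x_i\in c_I$ and $x_j\in c_J$. For the D-IPS loss $\ell_d$ this yields $\mathrm{sign}(A_{ij}) = \mathrm{sign}(R_{IJ})$ throughout the block, and for the W-IPS loss $\ell_w$ it yields the stronger $A_{ij}=R_{IJ}$; either way, the sign (and in fact the value in the weighted case) of every entry inside a block is forced to equal that of the single relation $R_{IJ}$.

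Next I would handle the two assertions separately. For (2), since the summarized graph $H$ is simple, $R_{II}=0$ for every compressed node $c_I$; combined with the sign-matching above, this forces $A_{ij}=0$ for all $x_i,x_j\in c_I$, so $\tilde A_{II}$ is the zero matrix. For (1), suppose $\tilde A_{IJ}\neq 0$ for some $I\neq J$; pick an entry $A_{ij}\neq 0$ inside it. Then $\mathrm{sign}(R_{IJ})=\mathrm{sign}(A_{ij})\neq 0$, so $R_{IJ}\neq 0$. Because $H$ is simple, the pair $(I,J)$ admits at most one directed relation, so $R_{JI}=0$. Applying the IPS condition again on block $(J,I)$ gives $\mathrm{sign}(A_{ji'}) = \mathrm{sign}(R_{JI}) = 0$ for every $x_j\in c_J$, $x_{i'}\in c_I$, whence $\tilde A_{JI}=0$, which is exactly the block-wise transpose of $\tilde A_{IJ}$.

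There is no real obstacle here; the argument is essentially a bookkeeping translation of ``summarized graph is simple'' into block statements via the zero-reconstruction-error hypothesis. The only subtlety worth flagging is that the lemma must hold for both D-IPS and W-IPS versions of exact IPS, so I would phrase the sign/value deduction uniformly (both variants give $A_{ij}=0 \Leftrightarrow R_{IJ}=0$, which is all that is needed).
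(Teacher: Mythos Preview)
Your proposal is correct and follows essentially the same approach as the paper: the paper's proof is a two-sentence version of exactly what you wrote, invoking the definition of exact D-IPS (edges between compressed nodes share the same direction, no links within a compressed node) together with the fact that the summarized graph is simple. Your version is simply a more careful unpacking of the same idea, with the added remark that the argument works uniformly for D-IPS and W-IPS.
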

\begin{proof}
By definition of the exact D-IPS, the direction of edges between compressed nodes are the same and there are no links within the compressed nodes. The result follows since summarized graph is simple.
\end{proof}
\begin{theorem}\label{Thm: equivalence-decomposition}(equivalence decomposition of $A$ and $T$)
Let $A$ and $T$ be the asymmetric and skew-symmetric adjacency matrix of a directed graph with the exact D-IPS, respectively. The optimal solution $U$ for $L_3$ and $L_4$ are the same up to permutation.
\end{theorem}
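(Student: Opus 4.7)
The plan is to exhibit a bijection between feasible pairs $(U,R)$ for $L_3$ and $(U,S)$ for $L_4$ that preserves both the objective value and the orthogonality constraint, and then transport the uniqueness result of Theorem~\ref{Thm: exact IPS-Identification} through it.

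First, I would observe that under exact D-IPS, Lemma~\ref{lem: block-form} gives $A_{ij}A_{ji} = 0$ for every $(i,j)$, so the asymmetric and skew-symmetric adjacency matrices satisfy the identity $T = A - A^\T$. In parallel, a feasible $R$ for $L_3$ is a non-negative matrix with $R_{IJ}R_{JI} = 0$; the map $R \mapsto R - R^\T$ sends such an $R$ to a skew-symmetric matrix, and inversely any skew-symmetric $S$ is sent back to $[S]_+$ (entrywise positive part). Skew-symmetry yields $[S]_+ - [S]_+^\T = S$ and $([S]_+)_{IJ}([S]_+)_{JI} = 0$, so the two maps are mutual inverses on the relevant feasible sets.

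Combining these ingredients, for any orthonormal $U$ and any feasible $R$ with $S := R - R^\T$,
\begin{equation*}
T - USU^\T \;=\; (A - URU^\T) - (A - URU^\T)^\T,
\end{equation*}
so $A = URU^\T$ immediately implies $T = USU^\T$. Applying Theorem~\ref{Thm: exact IPS-Identification} to $L_3$ yields the unique non-negative $U^*$ (up to column permutation) and $R^*$ with $A = U^* R^* U^{*\T}$; setting $S^* := R^* - R^{*\T}$ then gives $T = U^* S^* U^{*\T}$, so $U^*$ is $L_4$-optimal. Conversely, for an $L_4$-optimizer $(\tilde U, \tilde S)$ carrying the compressed-node-identifying structure, I would set $\tilde R := [\tilde S]_+$; then $A' := \tilde U \tilde R \tilde U^\T$ is non-negative with $A'_{ij}A'_{ji} = 0$, and since $A'$ and $A$ share the same antisymmetric part $T$, the elementary fact that any non-negative matrix $M$ with $M_{ij}M_{ji}=0$ is determined by $M - M^\T$ forces $A' = A$. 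Theorem~\ref{Thm: exact IPS-Identification} then identifies $\tilde U$ with $U^*$ up to column permutation.

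The main obstacle is the rotation ambiguity on the $L_4$ side: the objective $L_4$ is invariant under $(U, S) \mapsto (UQ, Q^\T S Q)$ for any orthogonal $Q \in \R^{k\times k}$, so without appealing to the non-negativity constraint of the stronger model $L_5$ in Eq.~\eqref{eq: nn-skew-factorization}, or proving an analogue of Theorem~\ref{Thm: exact IPS-Identification} directly for $L_4$ via a Perron-Frobenius argument on the skew-symmetric blocks of $T$, the canonical $U$ on the skew-symmetric side is not unique. Restricting attention to the non-negative, compressed-node-identifying optimizers (the regime actually targeted by StNMF) is where most of the care in the argument lies.
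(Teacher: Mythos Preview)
Your argument and the paper's share the same backbone: both exploit $T=A-A^{\T}$, $S=R-R^{\T}$, and hence
\[
T-USU^{\T}=(A-URU^{\T})-(A-URU^{\T})^{\T}.
\]
Where you diverge is in what you do next. The paper does not invoke Theorem~\ref{Thm: exact IPS-Identification} at all; instead it computes the cross term directly. Writing $M=A-URU^{\T}$, one has $\|M-M^{\T}\|_{\mathrm F}^{2}=2\|M\|_{\mathrm F}^{2}-2\,\mathrm{tr}(MM)$, and the paper argues $\mathrm{tr}(MM)=0$ term-by-term: $\mathrm{tr}(AA)=0$ and $\mathrm{tr}(RR)=0$ because both adjacency matrices encode simple graphs (Lemma~\ref{lem: block-form}), while $\mathrm{tr}(U^{\T}AUR)=0$ because under exact D-IPS the matrix $U^{\T}AU$ has the same zero pattern as $R$, so every summand $(U^{\T}AU)_{IJ}R_{JI}$ vanishes by $R_{IJ}R_{JI}=0$. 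This yields $L_{4}=2L_{3}$ outright, so the two problems have the same minimizers without any appeal to uniqueness.

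Your route---building the explicit bijection $R\leftrightarrow[S]_{+}$ and transporting the unique non-negative optimizer through Theorem~\ref{Thm: exact IPS-Identification}---is also sound, and it has the virtue of being more explicit about feasibility. But it is heavier: you end up needing Theorem~\ref{Thm: exact IPS-Identification} as an input, and you still have to confront the $O(k)$ rotation ambiguity on the $L_{4}$ side, which (as you correctly note) the bare objective does not resolve. The paper's cross-term computation sidesteps both of these: once $L_{4}=2L_{3}$, the optimizer sets coincide \emph{as sets}, and whatever indeterminacy $L_{4}$ has is already present in $L_{3}$ and handled by ``up to permutation''. So the paper's argument is shorter and more self-contained, while yours surfaces an ambiguity that the paper's formulation quietly absorbs.
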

Detailed proofs can be found in Appendix \ref{sec:additional_proofs}.

\begin{corollary}
Let $T$ be the skew-symmetric adjacency matrix of a directed graph with the exact D-IPS. Assume that each submatrix between compressed nodes has geometric multiplicity two. The optimization problem with loss $L_4$ has a unique solution such that $U\geq 0$ and the positive part of each column vectors in $U$ identifies communities.
\end{corollary}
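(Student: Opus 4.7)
The plan is to derive the corollary as an immediate consequence of Theorem~\ref{Thm: exact IPS-Identification} (identifiability for $L_3$ based on the asymmetric matrix $A$) combined with Theorem~\ref{Thm: equivalence-decomposition} (which equates the $L_3$ and $L_4$ minimizers up to permutation). In essence, I would translate the multiplicity hypothesis stated for $T$ back to a multiplicity hypothesis for $A$, invoke the earlier theorem to obtain a unique non-negative $U$, and then transport both the uniqueness and the identification property across the $A\leftrightarrow T$ equivalence.

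First, I would bring $T$ into its block form using Lemma~\ref{lem: block-form}. For any pair of compressed nodes $c_I,c_J$, at most one of the $(I,J)$ and $(J,I)$ blocks of $A$ is nonzero; the corresponding submatrix between compressed nodes in $T$ is the $2\times 2$ skew-symmetric block $B_{IJ}$ whose off-diagonal blocks are $\tilde A_{IJ}$ and $-\tilde A_{IJ}^{\T}$. A direct computation of $B_{IJ}^{\T}B_{IJ}$ yields a block-diagonal matrix with diagonal blocks $\tilde A_{IJ}\tilde A_{IJ}^{\T}$ and $\tilde A_{IJ}^{\T}\tilde A_{IJ}$, so its nonzero eigenvalues are exactly the nonzero eigenvalues of $\tilde A_{IJ}^{\T}\tilde A_{IJ}$, but each with doubled multiplicity. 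Consequently, the hypothesis that each such submatrix of $T$ has geometric multiplicity two is, block by block, the image under the skew-symmetric lift of the hypothesis of Theorem~\ref{Thm: exact IPS-Identification} that each submatrix of $A$ between compressed nodes has distinct leading singular values with geometric multiplicity one.

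Under this translated hypothesis, Theorem~\ref{Thm: exact IPS-Identification} produces a unique $U\in\R^{n\times k}$ with $U\geq 0$ solving the $L_3$ problem, whose columnwise positive supports coincide with the $k$ compressed nodes. Theorem~\ref{Thm: equivalence-decomposition} then asserts that the minimizer of $L_4$ is the same $U$ up to a permutation of its columns. Because a column permutation preserves the orthogonality constraint $U^{\T}U=I_k$, preserves non-negativity, and permutes rather than alters the supports of the columns, both uniqueness (modulo this inherent column-labeling ambiguity) and the identification property transfer verbatim from $L_3$ to $L_4$, giving the stated conclusion.

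The main obstacle I expect is the multiplicity bookkeeping. One has to be careful that the doubling of multiplicity in passing from $\tilde A_{IJ}$ to $B_{IJ}$ is compatible with the ``distinct leading singular value'' condition needed to invoke Theorem~\ref{Thm: exact IPS-Identification}, and that no accidental coincidence of singular values between different blocks is introduced by the skew-symmetric lift; this is essentially routine linear algebra but is the one place where the block structure of Lemma~\ref{lem: block-form} and the skew-symmetric structure of $T$ are jointly used. A secondary, lighter check is that the permutation freedom in Theorem~\ref{Thm: equivalence-decomposition} cannot merge or split compressed nodes, so that ``positive part identifies compressed nodes'' survives intact; this follows because the columns produced by Theorem~\ref{Thm: exact IPS-Identification} have pairwise disjoint supports, a property invariant under any column permutation.
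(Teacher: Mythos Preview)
Your proposal is correct and follows exactly the paper's approach: the paper's entire proof is the single sentence ``The corollary follows from combining Theorem~\ref{Thm: exact IPS-Identification} and Theorem~\ref{Thm: equivalence-decomposition}.'' Your write-up in fact supplies more care than the paper does, particularly in explaining why the geometric-multiplicity-two hypothesis on the skew-symmetric blocks $B_{IJ}$ corresponds to the multiplicity-one hypothesis on the $\tilde A_{IJ}$ blocks needed by Theorem~\ref{Thm: exact IPS-Identification}; the paper leaves that translation implicit.
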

The corollary follows from combining Theorem \ref{Thm: exact IPS-Identification} and Theorem \ref{Thm: equivalence-decomposition}.

\subsection{Convergence Analysis}\label{subsec: algo-analysis}
In this section, we analyze the convergence of fixed regularization scheme in Algorithm \ref{alg:MUR-fix-lambda} and the adaptive regularization scheme with column-wise normalization in Algorithm \ref{alg:adaptive-MUR}.
Optimizing the objective in Eq.~\eqref{eq:stnmf-obj}, for fixed $\Lambda$ can be written as
\begin{equation}\label{eq:stnmf-obj-2}
\min_{U\geq0}L_{6}(T; U, S, \Lambda)=\min_{U\geq0}\mathrm{tr}(-2U^{\T}T^{\T}US + US^{\T}U^{\T}USU^{\T} + U\Lambda U^{\T} )
\end{equation}

\begin{lemma}
\label{lem:aux-fun}
Let $Q=T^{\T}US$ and $P=S^{\T}U^{\T}US$.
$$Z(U,U')=\mathrm{tr}(-2Q_{+}U^{\T} - UP_{-}U^{\T})+ \sum_{ij} \frac{[U'(P_{+}+\Lambda)]_{ij}U_{ij}^2}{U_{ij}'} + 2\left[Q_{-}\right]_{ij} \frac{U_{ij}^{2}+U_{ij}'^{2}}{2U_{ij}'}$$
is an auxiliary function of Eq.~\eqref{eq:stnmf-obj-2}.
\end{lemma}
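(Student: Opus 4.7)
\textbf{Proof plan for Lemma \ref{lem:aux-fun}.}

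The plan is to verify the two defining properties of an auxiliary function for the objective in Eq.~\eqref{eq:stnmf-obj-2}: (i) $Z(U,U) = L_{6}(U) + c$ for some constant $c$ independent of $U$, and (ii) $Z(U,U') \geq L_{6}(U) + c$ for all $U, U' \geq 0$. Once established, the multiplicative update in Algorithm~\ref{alg:MUR-fix-lambda} arises as the unique minimizer of $Z(\,\cdot\,,U')$, after which monotone non-increase of $L_{6}$ follows by the standard MM argument. Throughout, I will treat $Q = T^{\T} U' S$ and $P = S^{\T} U'^{\T} U' S$ as evaluated at the fixed iterate $U'$, consistent with how these quantities enter the update rule.

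First, I would expand $L_{6}$ into a more tractable form. Using the cyclic property of the trace and the identities $\mathrm{tr}(T^{\T} U S U^{\T}) = \mathrm{tr}(Q^{\T} U)$ and $\mathrm{tr}(U S^{\T} U^{\T} U S U^{\T}) = \mathrm{tr}(U P U^{\T})$, the objective reads
\[
L_{6}(U) \;=\; -2\,\mathrm{tr}(Q^{\T} U) + \mathrm{tr}(U P U^{\T}) + \mathrm{tr}(U \Lambda U^{\T}) + \mathrm{const}.
\]
Splitting $Q = Q_{+} - Q_{-}$ and $P = P_{+} - P_{-}$ elementwise (note $P$ is symmetric since $P^{\T} = P$, so $P_{\pm}$ inherit symmetry, and $\Lambda$ is symmetric by assumption), verifying (i) is then a direct substitution $U' \mapsto U$: the third summand in $Z$ collapses via $\sum_{ij} [U(P_{+}+\Lambda)]_{ij} U_{ij} = \mathrm{tr}(U(P_{+}+\Lambda) U^{\T})$ (using symmetry of $P_{+}+\Lambda$), and the fourth summand collapses to $2\,\mathrm{tr}(Q_{-}^{\T} U)$. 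Combining with $-2\,\mathrm{tr}(Q_{+} U^{\T}) - \mathrm{tr}(U P_{-} U^{\T})$ recovers $L_{6}(U)$ up to the constant.

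For (ii), the strategy is to subtract and show the excess is a sum of manifestly non-negative pieces. A short calculation gives
\[
Z(U,U') - L_{6}(U) + \mathrm{const} \;=\; \underbrace{\sum_{ij}\frac{[U'(P_{+}+\Lambda)]_{ij}\,U_{ij}^{2}}{U'_{ij}} - \mathrm{tr}\bigl(U(P_{+}+\Lambda) U^{\T}\bigr)}_{(\mathrm{I})} \;+\; \underbrace{\sum_{ij} [Q_{-}]_{ij}\,\frac{(U_{ij}-U'_{ij})^{2}}{U'_{ij}}}_{(\mathrm{II})}.
\]
Term (II) is a non-negative combination of squares since $Q_{-}, U' \geq 0$. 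For term (I), I would invoke the standard Lee--Seung-style Jensen bound: for any symmetric entrywise-non-negative $A$ and $U, U' \geq 0$, the AM-GM inequality $U_{ij} U_{ik} \leq \tfrac{1}{2}\bigl(U_{ij}^{2}\,U'_{ik}/U'_{ij} + U_{ik}^{2}\,U'_{ij}/U'_{ik}\bigr)$ yields, after relabeling indices and using $A^{\T}=A$,
\[
\mathrm{tr}(U A U^{\T}) \;=\; \sum_{ijk} A_{jk} U_{ij} U_{ik} \;\leq\; \sum_{ij} \frac{(U'A)_{ij}\,U_{ij}^{2}}{U'_{ij}}.
\]
Applying this to $A = P_{+} + \Lambda$ dispatches term (I).

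The main obstacle, which I would address carefully, is ensuring that $A = P_{+}+\Lambda$ is elementwise non-negative, since the AM-GM bound reverses if $A_{jk}<0$. For $P_{+}$ this is automatic, and under the working convention in Algorithm~\ref{alg:MUR-fix-lambda} (where $\Lambda$ appears in the denominator of a valid multiplicative update) one implicitly takes $\Lambda \geq 0$ entrywise; in the fully general symmetric case one splits $\Lambda = \Lambda_{+}-\Lambda_{-}$ and absorbs $-\mathrm{tr}(U\Lambda_{-} U^{\T})$ alongside $-\mathrm{tr}(U P_{-} U^{\T})$ in the unbounded (exact) part of $Z$, which leaves both (i) and (ii) intact. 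With these two bounds in hand, $Z$ satisfies both auxiliary-function conditions, completing the proof.
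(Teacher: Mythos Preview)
Your proposal is correct and follows essentially the same route as the paper's proof: decompose $Q$ and $P$ into their positive and negative parts, bound the $\mathrm{tr}(U(P_{+}+\Lambda)U^{\T})$ term by the Jensen/AM--GM inequality (which the paper invokes as Proposition~\ref{prop: prop6} from \cite{ding2006orthogonal} rather than reproving it), and bound the $\mathrm{tr}(Q_{-}U^{\T})$ term via the elementary inequality $a\leq\tfrac{a^{2}+b^{2}}{2b}$, which is exactly your square-completion in term~(II). Your explicit flagging of the entrywise nonnegativity needed for $P_{+}+\Lambda$ is a point the paper handles only implicitly in this lemma (it is stated as a standing assumption later in the appendix).
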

The proof is based on pairing symmetric terms and details can be found in Appendix \ref{sec:additional_proofs}.

\begin{lemma}\label{lem:adpative-aux-fun}
Choosing $\Lambda = U^{\T}Q_{+} + P_{-} - U^{\T}Q_{-} - P_{+} $, the objective in Eq.~\eqref{eq:stnmf-obj-2} becomes:
\begin{equation}\label{eq: adaptive-objective}
\min_{U\geq0}L_7(T; U, S, \Lambda)=\min_{U\geq 0} \mathrm{tr}(-2U^{\T}Q_{+}-U^{\T}UP_{-}+U^{\T}U(U^{\T}Q_{+} + P_{-} - U^{\T}Q_{-})+2U^{\T}Q_{-})
\end{equation}
then 
\begin{equation}\label{eq:adaptive_aux}
Z(U,U')=\mathrm{tr}(-2Q_{+}U^{\T} - U^{\T}UP_{-})+ \sum_{ij} \frac{[U'(U'^{\T}Q_{+} + P_{-})]_{ij}U_{ij}^2 + [Q_{-}U'U'^{\T}]U'^2_{ij}}{U_{ij}'}    
\end{equation}
is an auxiliary function of Eq.~\eqref{eq: adaptive-objective}.
\end{lemma}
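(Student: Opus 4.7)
The plan is to establish two independent claims: first, that substituting the adaptive choice of $\Lambda$ into the $L_6$ of Eq.~\eqref{eq:stnmf-obj-2} collapses to the displayed $L_7$; and second, that the function $Z(U,U')$ majorizes $L_7$ in the sense required of an auxiliary function, namely $Z(U,U) = L_7(U)$ and $Z(U,U') \geq L_7(U)$ for every non-negative $U$.

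For the first claim, I would start from the expansion $L_6 = \mathrm{tr}(-2U^{\T}Q_+ + 2U^{\T}Q_- + UP_+U^{\T} - UP_-U^{\T}) + \mathrm{tr}(U^{\T}U\Lambda)$, using the splittings $Q = Q_+ - Q_-$, $P = P_+ - P_-$, and the cyclic identity $\mathrm{tr}(U\Lambda U^{\T}) = \mathrm{tr}(U^{\T}U\Lambda)$. Substituting the adaptive $\Lambda = U^{\T}Q_+ + P_- - U^{\T}Q_- - P_+$ and again invoking trace cyclicity shows that the pair $\mathrm{tr}(UP_+U^{\T})$ and $-\mathrm{tr}(U^{\T}UP_+)$ cancels, leaving exactly the expression in Eq.~\eqref{eq: adaptive-objective}. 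This is a routine but bookkeeping-heavy computation and essentially mirrors the standard KKT derivation.

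For the second claim, I would decompose $L_7$ into terms of definite sign (relative to the elementwise non-negativity of $U$, $Q_\pm$, $P_\pm$) and majorize each separately. The linear trace pieces $-2\,\mathrm{tr}(Q_+U^{\T})$ and $-\mathrm{tr}(U^{\T}UP_-)$ are concave enough to be kept as-is in $Z$. The positive quartic term $\mathrm{tr}(U^{\T}UU^{\T}Q_+) = \sum_{i,j,l,m}U_{ij}U_{il}U_{ml}[Q_+]_{mj}$ is upper-bounded by $\sum_{ij}\bigl[U'U'^{\T}Q_+\bigr]_{ij}U_{ij}^2/U'_{ij}$ via a three-way symmetrization of the weighted AM--GM inequality applied pointwise, so that after weighting by $[Q_+]_{mj} \geq 0$ and reindexing, the three symmetric copies collapse to the stated majorizer. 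The $P_-$ piece is handled identically. For the negative quartic $-\mathrm{tr}(U^{\T}UU^{\T}Q_-)$ paired with $+2\,\mathrm{tr}(U^{\T}Q_-)$, I would use the supporting-line interpretation at $(U', U', U')$: the $U$-linear parts of the tangent plane either telescope against $2\,\mathrm{tr}(U^{\T}Q_-)$ or absorb into the $U$-independent residue $\sum_{ij}[U'U'^{\T}Q_-]_{ij}U'_{ij}$ appearing in $Z$. Equality at $U = U'$ then follows because every Jensen/AM--GM step I invoke becomes an identity when $U_{ij} = U'_{ij}$, and the constant last term reduces precisely to $\mathrm{tr}(U^{\T}UU^{\T}Q_-)$ after relabeling.

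The main obstacle is the quartic (trilinear-in-$U$) contribution: unlike Lemma~\ref{lem:aux-fun}, where the majorization only needs to handle bilinear forms, here the ``weight matrix'' inside the trace depends on $U$ itself through the $U^{\T}U$ factor. Getting the signs right for the $Q_-$ contribution is particularly delicate, since $f(x,y,z) = xyz$ is neither convex nor concave on $[0,\infty)^3$, so a naive supporting-hyperplane trick does not yield a one-sided bound. The cleanest fix is a weighted AM--GM inequality of the form
\[
3\,U_{ij}U_{il}U_{ml} \;\leq\; \frac{U_{ij}^{2}U'_{il}U'_{ml}}{U'_{ij}} + \frac{U'_{ij}U_{il}^{2}U'_{ml}}{U'_{il}} + \frac{U'_{ij}U'_{il}U_{ml}^{2}}{U'_{ml}},
\]
with weights chosen so that summing against the non-negative factor $[Q_+]_{mj}$ and using the symmetry of the three index groups assembles into the single clean majorizer; the analogous lower bound then controls the $Q_-$ term, with the complementary piece showing up as the explicitly $U$-independent summand in $Z$.
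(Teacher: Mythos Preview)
Your derivation of $L_7$ from $L_6$ is fine; the problem is entirely in the majorization step, where your route diverges from the paper's and does not close.

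The paper does \emph{not} treat the terms $\mathrm{tr}(U^{\T}UU^{\T}Q_{\pm})$ as genuinely trilinear in $U$. Instead it invokes two structural facts that are specific to Algorithm~\ref{alg:adaptive-MUR}: (i) Lemma~\ref{lem:sym-form}, which says that under the column-normalized update $S=U^{\T}TU$ one has $U'^{\T}Q=S^{\T}S$, hence $U'^{\T}Q_{+}$ and $U'^{\T}Q_{-}$ are symmetric; and (ii) Lemma~\ref{lem:norm-trace}, which uses the unit-column property of $U'$ to replace $2\,\mathrm{tr}(U^{\T}Q_{-})$ by the larger quantity $2\,\mathrm{tr}(U'^{\T}U'\,U^{\T}Q_{-})$. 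With (i), the inner matrix $U'^{\T}Q_{+}+P_{-}$ is symmetric and non-negative, so the ordinary bilinear Proposition~\ref{prop: prop6} applies to $\mathrm{tr}(U^{\T}U(U'^{\T}Q_{+}+P_{-}))$ and produces exactly the $\sum_{ij}[U'(U'^{\T}Q_{+}+P_{-})]_{ij}U_{ij}^{2}/U'_{ij}$ term in $Z$. With (ii), the troublesome pair $-\mathrm{tr}(U^{\T}UU'^{\T}Q_{-})+2\,\mathrm{tr}(U^{\T}Q_{-})$ is first bounded above by a term linear in $U$, and then by the elementary inequality $a\le(a^{2}+b^{2})/(2b)$, which gives the $[Q_{-}U'U'^{\T}]U'^{2}_{ij}/U'_{ij}$ piece. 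You use neither of these lemmas, and without them the argument does not go through.

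Concretely, your three-way AM--GM bound on $\sum_{i,j,l,m}U_{ij}U_{il}U_{ml}[Q_{+}]_{mj}$ yields three \emph{different} summands, namely $\tfrac{1}{3}\sum_{ij}\frac{U_{ij}^{2}}{U'_{ij}}\bigl([U'U'^{\T}Q_{+}]_{ij}+[U'Q_{+}^{\T}U']_{ij}+[Q_{+}U'^{\T}U']_{ij}\bigr)$, and these do not collapse to the single $[U'U'^{\T}Q_{+}]_{ij}$ appearing in $Z$ unless $U'^{\T}Q_{+}$ is symmetric---precisely the content of Lemma~\ref{lem:sym-form} that you omit. For the $Q_{-}$ block the situation is worse: you need an \emph{upper} bound on $-\mathrm{tr}(U^{\T}UU^{\T}Q_{-})+2\,\mathrm{tr}(U^{\T}Q_{-})$, i.e.\ a lower bound on the cubic term, and as you yourself note, $xyz$ admits no linear supporting-plane lower bound on the non-negative orthant. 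Your ``analogous lower bound'' is therefore not available; the paper sidesteps this entirely by first inflating $2\,\mathrm{tr}(U^{\T}Q_{-})$ via Lemma~\ref{lem:norm-trace}, which converts the problem back into one where only upper bounds on non-negative bilinear forms are needed.
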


\begin{theorem}\label{thm:non-increasing}
The update rule described in Algorithm \ref{alg:MUR-fix-lambda} is non-increasing and converges to the stationary point of objective in Eq.~\eqref{eq:stnmf-obj} .
\end{theorem}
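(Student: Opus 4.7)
The plan is to use the auxiliary function machinery established in Lemma~\ref{lem:aux-fun} together with the standard Lee--Seung style monotone decrease argument. Recall the auxiliary function property $Z(U,U)=L_{6}(U)$ and $Z(U,U')\geq L_{6}(U)$ for all feasible $U,U'$, so that defining $U^{(t+1)}=\arg\min_{U\geq 0} Z(U,U^{(t)})$ yields the chain
$$L_{6}(U^{(t+1)}, S^{(t)}) \;\leq\; Z(U^{(t+1)},U^{(t)}) \;\leq\; Z(U^{(t)},U^{(t)}) \;=\; L_{6}(U^{(t)}, S^{(t)}).$$

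The first step is therefore to verify that the multiplicative $U$-update in Algorithm~\ref{alg:MUR-fix-lambda} coincides with $\arg\min_{U\geq 0} Z(U,U^{(t)})$. Computing $\partial Z/\partial U_{ij}$ and setting it equal to zero gives an element-wise equation whose unique positive solution, after elementary algebraic rearrangement, reproduces the stated update $U \leftarrow U \odot ([TUS^{\T}]_{+}+U[P]_{-})/([TUS^{\T}]_{-}+U([P]_{+}+\Lambda))$, using the identification $Q=T^{\T}US$, $P=S^{\T}U^{\T}US$, and the skew-symmetry of $S$ (which ensures $P$ is symmetric so that $P_{\pm}$ are well defined). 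The same decrease argument is then repeated for the $S$-update: with $U$ fixed, $L_{6}$ is a quadratic in $S$, and an analogous auxiliary function built from the positive and negative parts of $U^{\T}TU$ has the multiplicative rule in line 4 as its minimizer. Since $U^{\T}TU$ inherits skew-symmetry from $T$ while $U^{\T}U$ is symmetric, the zero-locking property of the multiplicative scheme preserves the skew-symmetric initialization of $S$ throughout the iterations.

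Finally, convergence to a stationary point follows the standard KKT argument. The composite sequence $L_{6}(U^{(t)},S^{(t)})$ is non-increasing and bounded below (since $\|T-USU^{\T}\|_{\mathrm{F}}^{2}\geq 0$ and the orthogonality penalty admits a uniform lower bound on any sublevel set of the iterates), hence convergent. At any limit point $(U^{\star},S^{\star})$, the multiplicative fixed-point equation, after clearing denominators, rewrites as the complementary slackness condition $U_{ij}^{\star}\,[\nabla_{U}L_{6}]_{ij}=0$, while primal feasibility $U^{\star}\geq 0$ is preserved by construction and the $S$-gradient vanishes by the closed-form balance between numerator and denominator. Together with the analogous stationarity in $S$, these are exactly the KKT conditions for Eq.~\eqref{eq:stnmf-obj}. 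The main obstacle I anticipate is in this last step: matching the multiplicative fixed point to the \emph{full} KKT system requires verifying \emph{dual feasibility} $[\nabla_{U}L_{6}]_{ij}\geq 0$ at boundary indices where $U_{ij}^{\star}=0$ (not merely complementary slackness). This calls for a careful sign-tracking of $\nabla_{U}L_{6}$ in terms of the $Q_{\pm}, P_{\pm}$ decomposition, along with extra care when $\Lambda$ is indefinite, to ensure the denominator in the $U$-update stays strictly positive so the update is well defined and the fixed-point analysis goes through.
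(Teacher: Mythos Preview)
Your proposal is correct and follows essentially the same auxiliary-function route as the paper: differentiate $Z(U,U')$ from Lemma~\ref{lem:aux-fun}, solve for the multiplicative $U$-update, handle $S$ separately, then invoke boundedness plus monotone decrease. The paper's own proof is in fact terser than yours---it defers the $S$-update entirely to Theorem~8 of \cite{ding2006orthogonal} and does not engage the dual-feasibility subtlety you flag at the end---so your extra caution there goes beyond what the paper itself supplies.
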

Proof of Theorem \ref{thm:non-increasing} can be found in Appendix \ref{sec:additional_proofs}. The proof technique is based on some carefully chosen symmetric matrices, which is applicable for fixed $\Lambda$ as it is a symmetric matrix. With the adaptively chosen $\Lambda$ in each step, as in Eq.~\eqref{eq:Lambda-val}, the column of $U$ does not have a fixed norm and $P_{+}+\Lambda$ is no longer symmetric, making the proof technique for Theorem \ref{thm:non-increasing} not applicable. However, with the proposed column-wise normalization scheme, the Algorithm is shown to be monotonic non-increasing and convergent to the stationary point. 

\begin{theorem}\label{thm:adaptive-non-increasing}
The update rule described in Algorithm \ref{alg:adaptive-MUR} is non-increasing and convergent.
\end{theorem}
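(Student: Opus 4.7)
The plan is to decompose a single iteration of Algorithm \ref{alg:adaptive-MUR} into its three substeps---the multiplicative $U$-update (step 3), the column-wise normalization (step 4), and the closed-form $S$-update (step 5)---and establish that the adaptively regularized objective $L_{7}$ in Eq.~\eqref{eq: adaptive-objective} is non-increasing along the iterates. Since $L_{7}$ is bounded below by $0$, monotone convergence then yields convergence of the objective sequence; compactness of the iterate sequence will then produce convergent subsequences by Bolzano--Weierstrass.

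For the $U$-update I would invoke Lemma \ref{lem:adpative-aux-fun}. The auxiliary function $Z(U,U')$ in Eq.~\eqref{eq:adaptive_aux} is separable in $U_{ij}$, and each summand is a convex quadratic in $U_{ij}$ with strictly positive leading coefficient, so $\partial Z/\partial U_{ij}=0$ together with the non-negativity constraint admits a unique closed-form minimizer. I would verify directly that this minimizer coincides with step 3 of the algorithm. The auxiliary-function inequality
\[
L_{7}(U^{(t+1)}) \;\leq\; Z(U^{(t+1)},U^{(t)}) \;\leq\; Z(U^{(t)},U^{(t)}) \;=\; L_{7}(U^{(t)})
\]
then delivers monotone decrease for this substep, in complete parallel to the proof of Theorem \ref{thm:non-increasing} from Lemma \ref{lem:aux-fun}.

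For steps 4 and 5 I would exploit the invariance $(U,S) \mapsto (UD^{-\frac{1}{2}},\,D^{\frac{1}{2}}SD^{\frac{1}{2}})$ of $USU^{\T}$, where $D := D_{U^{\T}U}$ is diagonal. The column normalization forces $\mathrm{diag}(U^{\T}U)=I$, and when $U^{\T}U = I_{k}$ the assignment $S \leftarrow U^{\T}TU$ is precisely the global minimizer of $\|T-USU^{\T}\|_{\mathrm{F}}^{2}$ over skew-symmetric $S$, inheriting skew-symmetry from the skew-symmetry of $T$. Hence this composite substep cannot increase the reconstruction error beyond its value immediately after the $U$-update.

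The principal obstacle is that column normalization only equalizes the diagonal of $U^{\T}U$, so off-diagonal entries may remain nonzero and $U^{\T}TU$ is only approximately the optimal $S$; consequently the bookkeeping needed to show that the decrease in $L_{7}$ during step 3 is not wiped out during steps 4--5 is delicate. The adaptive regularizer $\Lambda = U^{\T}Q_{+} + P_{-} - U^{\T}Q_{-} - P_{+}$ is by construction the KKT multiplier encoding orthogonality, so the $L_{7}$-descent implicitly penalizes those off-diagonal entries; I would therefore close the argument by showing that the normalized $U^{(t)}$ has unit-norm columns and $S^{(t)} = (U^{(t)})^{\T} T U^{(t)}$ is bounded by $\|T\|$, so that the iterate trajectory lies in a compact set, and any accumulation point must satisfy the KKT stationarity conditions associated with Eq.~\eqref{eq: adaptive-objective} by continuity of the multiplicative update map at a fixed point.
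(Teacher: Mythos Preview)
Your approach coincides with the paper's at its core: both derive the multiplicative $U$-update by minimizing the auxiliary function of Lemma~\ref{lem:adpative-aux-fun}, verify that the minimizer matches step~3 of Algorithm~\ref{alg:adaptive-MUR}, and then appeal to boundedness below for convergence of the objective.

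Where you diverge is in the treatment of the normalization and the $S$-update. You regard them as separate substeps that must each be shown non-increasing, attempting this via the scaling invariance $(U,S)\mapsto(UD^{-1/2},D^{1/2}SD^{1/2})$ together with the optimality of $S\leftarrow U^{\T}TU$ when $U^{\T}U=I_k$. You then correctly note that column normalization only fixes $\mathrm{diag}(U^{\T}U)$, so neither ingredient quite closes; your retreat to compactness yields subsequential convergence of the iterates but does not repair the monotonicity claim itself. This is a genuine gap in your argument as written.

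The paper takes a different view of normalization: it is not a separate step whose effect on the objective must be bounded, but the mechanism that makes the auxiliary-function construction valid for the \emph{next} iteration. The proof of Lemma~\ref{lem:adpative-aux-fun} needs $U'^{\T}Q$ symmetric in order to apply Proposition~\ref{prop: prop6}, and Lemma~\ref{lem:sym-form} shows this holds precisely because the previous iteration ended with $U'$ column-normalized and $S=U'^{\T}TU'$, forcing $U'^{\T}Q=S^{\T}S$. In that framing the normalization and $S$-update are prerequisites for the subsequent descent inequality rather than steps requiring their own non-increase argument. Your compactness addendum and KKT accumulation-point discussion go beyond what the paper proves; the paper only asserts convergence to a stationary point of the objective.
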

The proof is by constructing a symmetric matrix based on the unit-normed column vectors and applying the proof techniques in Theorem \ref{thm:non-increasing}. Details can be found in Appendix \ref{sec:additional_proofs}.

\section{Experiments and Results} \label{sec:exp}
\begin{figure}
\vspace{-2ex}
\hspace*{-1.5cm} \includegraphics[scale=0.3]{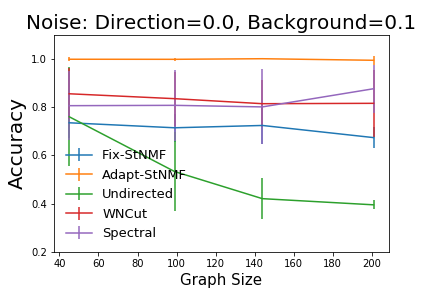}\includegraphics[scale=0.3]{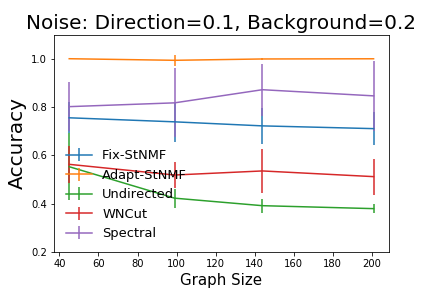}\includegraphics[scale=0.3]{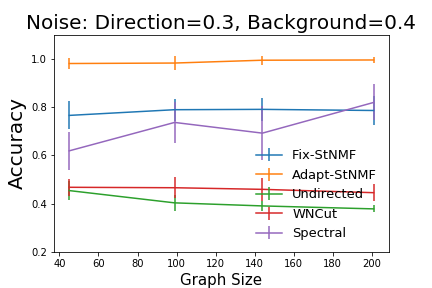}\includegraphics[scale=0.3]{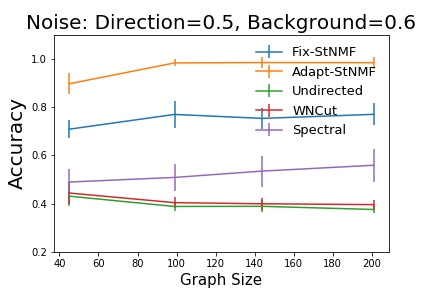}
\hspace*{-1.5cm}   \includegraphics[scale=0.3]{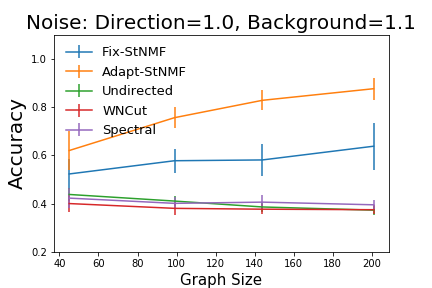}\includegraphics[scale=0.3]{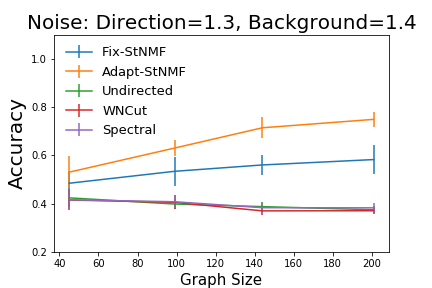}\includegraphics[scale=0.3]{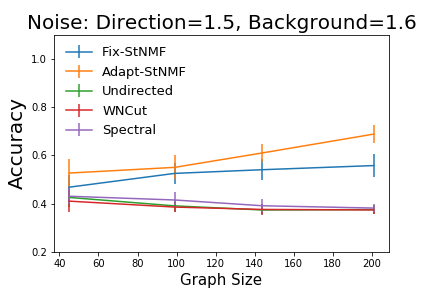}\includegraphics[scale=0.3]{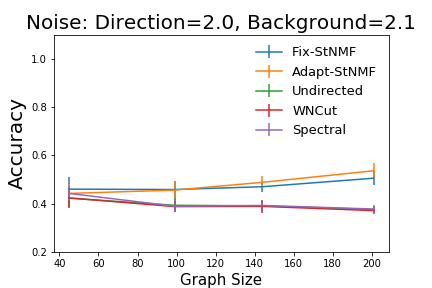}
    \caption{Compressed Node Assignment Accuracy with Different Noise Level}
    \label{fig:synethetic_result}
\vspace{-1ex}
\end{figure}

In this section, we apply the graph summarization model on synthetically generated directed graphs and compare with summarization methods on the undirected cases as well as conventional clustering such as spectral methods or normalized cut methods.
In the synthetic examples, we simulate graphs of different sizes at different noise levels with known compressed node. The background noise is the ratio: $\gamma_{b} =\frac{\sum_{i,j\notin \mathrm{D-IPS}}|e_{i'j'}|}{\sum_{i,j\in \mathrm{D-IPS}}|e_{ij}|}$ where the direction noise is the ratio: $\gamma_{d} =\frac{\sum_{i,j\in \mathrm{D-IPS}, e_{ij}<0}e_{ij}}{\sum_{i,j\in \mathrm{D-IPS}, e_{ij}\geq 0}e_{ij}}<0.5$. We compare the following algorithms: Fix-StNMF is the fixed regularization scheme in Algorithm \ref{alg:MUR-fix-lambda} and $\Lambda$ is chosen as a scalar time all-one matrix; Adaptive-StNMF is the adaptive scheme described in Algorithm \ref{alg:adaptive-MUR}; Undirected is the graph summarization scheme using the undirected skeleton, similar to \cite{hirayama2016characterizing}; WNCut is the weighted normalized cut scheme \citep{meilua2007clustering} for directed graph; Spectral is the clustering method using normalized Laplacian \citep{shi2000normalized}. 
From the result, we see that at low noise levels, the Adaptive-StNMF correctly finds the compressed node assignment, as the theory shows. When the noise level is higher, it still performs best among the competitors. The low accuracies for ``clustering methods" are expected as they do not maximize the desired objectives. Moreover, we see that the Fixed-StNMF is worse than the Adaptive version as we deliberately chose $\Lambda$ to be an all-one matrix, where the algorithm does not necessarily converge to the most useful local optimal, which shows that the learning accuracy is also sensitive to the choice of regularization parameters.  

\section{Conclusion and Future Work}\label{sec:conclusion}
We propose a new problem setting to summarize directed graphs. Our key contribution is to define a novel learning criterion that preserves the directed edge information from the original graph. Our criterion is related to the \textit{reconstruction error} from the summarized graph to the original graph. We proposed a non-negative algorithm to learn such graph summarization. We provide theoretical analysis on identifiability and convergence and experimental demonstration to show the usefulness of our method. 


\bibliographystyle{plainnat}
\bibliography{main}

\begin{thebibliography}{37}
\providecommand{\natexlab}[1]{#1}
\providecommand{\url}[1]{\texttt{#1}}
\expandafter\ifx\csname urlstyle\endcsname\relax
  \providecommand{\doi}[1]{doi: #1}\else
  \providecommand{\doi}{doi: \begingroup \urlstyle{rm}\Url}\fi

\bibitem[Boutsidis and Gallopoulos(2008)]{boutsidis2008svd}
Christos Boutsidis and Efstratios Gallopoulos.
\newblock Svd based initialization: A head start for nonnegative matrix
  factorization.
\newblock \emph{Pattern Recognition}, 41\penalty0 (4):\penalty0 1350--1362,
  2008.

\bibitem[Chaney(2001)]{chaney2001opinion}
Isabella~M Chaney.
\newblock Opinion leaders as a segment for marketing communications.
\newblock \emph{Marketing Intelligence \& Planning}, 19\penalty0 (5):\penalty0
  302--308, 2001.

\bibitem[Delmas et~al.(2019)Delmas, Besson, Brice, Burkle, Dalla~Riva, Fortin,
  Gravel, Guimar{\~a}es~Jr, Hembry, Newman, et~al.]{delmas2019analysing}
Eva Delmas, Mathilde Besson, Marie-H{\'e}l{\`e}ne Brice, Laura~A Burkle,
  Giulio~V Dalla~Riva, Marie-Jos{\'e}e Fortin, Dominique Gravel, Paulo~R
  Guimar{\~a}es~Jr, David~H Hembry, Erica~A Newman, et~al.
\newblock Analysing ecological networks of species interactions.
\newblock \emph{Biological Reviews}, 94\penalty0 (1):\penalty0 16--36, 2019.

\bibitem[Dhabu et~al.(2013)Dhabu, Deshpande, and
  Vishwakarma]{dhabu2013partition}
Meera Dhabu, P~Deshpande, and Siyaram Vishwakarma.
\newblock Partition based graph compression.
\newblock \emph{Editorial Preface}, 4\penalty0 (9), 2013.

\bibitem[Dhulipala et~al.(2016)Dhulipala, Kabiljo, Karrer, Ottaviano, Pupyrev,
  and Shalita]{dhulipala2016compressing}
Laxman Dhulipala, Igor Kabiljo, Brian Karrer, Giuseppe Ottaviano, Sergey
  Pupyrev, and Alon Shalita.
\newblock Compressing graphs and indexes with recursive graph bisection.
\newblock \emph{arXiv preprint arXiv:1602.08820}, 2016.

\bibitem[Ding et~al.(2006)Ding, Li, Peng, and Park]{ding2006orthogonal}
Chris Ding, Tao Li, Wei Peng, and Haesun Park.
\newblock Orthogonal nonnegative matrix t-factorizations for clustering.
\newblock In \emph{Proceedings of the 12th ACM SIGKDD international conference
  on Knowledge discovery and data mining}, pages 126--135. ACM, 2006.

\bibitem[Ding et~al.(2010)Ding, Li, and Jordan]{ding2010convex}
Chris~HQ Ding, Tao Li, and Michael~I Jordan.
\newblock Convex and semi-nonnegative matrix factorizations.
\newblock \emph{IEEE transactions on pattern analysis and machine
  intelligence}, 32\penalty0 (1):\penalty0 45--55, 2010.

\bibitem[Dulmage and Mendelsohn(1958)]{dulmage1958coverings}
Andrew~L Dulmage and Nathan~S Mendelsohn.
\newblock Coverings of bipartite graphs.
\newblock \emph{Canadian Journal of Mathematics}, 10:\penalty0 517--534, 1958.

\bibitem[Edelman et~al.(1998)Edelman, Arias, and Smith]{edelman1998geometry}
Alan Edelman, Tom{\'a}s~A Arias, and Steven~T Smith.
\newblock The geometry of algorithms with orthogonality constraints.
\newblock \emph{SIAM journal on Matrix Analysis and Applications}, 20\penalty0
  (2):\penalty0 303--353, 1998.

\bibitem[Fan et~al.(2012)Fan, Li, Wang, and Wu]{fan2012query}
Wenfei Fan, Jianzhong Li, Xin Wang, and Yinghui Wu.
\newblock Query preserving graph compression.
\newblock In \emph{Proceedings of the 2012 ACM SIGMOD International Conference
  on Management of Data}, pages 157--168. ACM, 2012.

\bibitem[Fortunato(2010)]{fortunato2010community}
Santo Fortunato.
\newblock Community detection in graphs.
\newblock \emph{Physics reports}, 486\penalty0 (3-5):\penalty0 75--174, 2010.

\bibitem[Hirayama et~al.(2016)Hirayama, Hyv{\"a}rinen, Kiviniemi, Kawanabe, and
  Yamashita]{hirayama2016characterizing}
Jun-ichiro Hirayama, Aapo Hyv{\"a}rinen, Vesa Kiviniemi, Motoaki Kawanabe, and
  Okito Yamashita.
\newblock Characterizing variability of modular brain connectivity with
  constrained principal component analysis.
\newblock \emph{PloS one}, 11\penalty0 (12):\penalty0 e0168180, 2016.

\bibitem[Hyv{\"a}rinen et~al.(2016)Hyv{\"a}rinen, Hirayama, Kiviniemi, and
  Kawanabe]{hyvarinen2016orthogonal}
Aapo Hyv{\"a}rinen, Jun-ichiro Hirayama, Vesa Kiviniemi, and Motoaki Kawanabe.
\newblock Orthogonal connectivity factorization: Interpretable decomposition of
  variability in correlation matrices.
\newblock \emph{Neural computation}, 28\penalty0 (3):\penalty0 445--484, 2016.

\bibitem[Jackson(2011)]{jackson2011overview}
Matthew~O Jackson.
\newblock An overview of social networks and economic applications.
\newblock In \emph{Handbook of social economics}, volume~1, pages 511--585.
  Elsevier, 2011.

\bibitem[Kuhn and Tucker(1951)]{kuhn1951nonlinear}
HW~Kuhn and AW~Tucker.
\newblock Nonlinear programming.
\newblock In \emph{Second Berkeley Symposium on Mathematical Statistics and
  Probability}, pages 481--492, 1951.

\bibitem[Lee and Seung(2001)]{lee2001algorithms}
Daniel~D Lee and H~Sebastian Seung.
\newblock Algorithms for non-negative matrix factorization.
\newblock In \emph{Advances in neural information processing systems}, pages
  556--562, 2001.

\bibitem[Leskovec and Faloutsos(2006)]{leskovec2006sampling}
Jure Leskovec and Christos Faloutsos.
\newblock Sampling from large graphs.
\newblock In \emph{Proceedings of the 12th ACM SIGKDD international conference
  on Knowledge discovery and data mining}, pages 631--636. ACM, 2006.

\bibitem[Li et~al.(2018{\natexlab{a}})Li, Zhang, and Huang]{li2018social}
Kan Li, Lin Zhang, and Heyan Huang.
\newblock Social influence analysis: models, methods, and evaluation.
\newblock \emph{Engineering}, 4\penalty0 (1):\penalty0 40--46,
  2018{\natexlab{a}}.

\bibitem[Li et~al.(2018{\natexlab{b}})Li, Fan, Wang, and Tan]{li2018influence}
Yuchen Li, Ju~Fan, Yanhao Wang, and Kian-Lee Tan.
\newblock Influence maximization on social graphs: A survey.
\newblock \emph{IEEE Transactions on Knowledge and Data Engineering},
  30\penalty0 (10):\penalty0 1852--1872, 2018{\natexlab{b}}.

\bibitem[Maccioni and Abadi(2016)]{maccioni2016scalable}
Antonio Maccioni and Daniel~J Abadi.
\newblock Scalable pattern matching over compressed graphs via dedensification.
\newblock In \emph{Proceedings of the 22nd ACM SIGKDD International Conference
  on Knowledge Discovery and Data Mining}, pages 1755--1764. ACM, 2016.

\bibitem[Malliaros and Vazirgiannis(2013)]{malliaros2013clustering}
Fragkiskos~D Malliaros and Michalis Vazirgiannis.
\newblock Clustering and community detection in directed networks: A survey.
\newblock \emph{Physics Reports}, 533\penalty0 (4):\penalty0 95--142, 2013.

\bibitem[Maneth and Peternek(2015)]{maneth2015survey}
Sebastian Maneth and Fabian Peternek.
\newblock A survey on methods and systems for graph compression.
\newblock \emph{arXiv preprint arXiv:1504.00616}, 2015.

\bibitem[Mehmood et~al.(2013)Mehmood, Barbieri, Bonchi, and
  Ukkonen]{mehmood2013csi}
Yasir Mehmood, Nicola Barbieri, Francesco Bonchi, and Antti Ukkonen.
\newblock Csi: Community-level social influence analysis.
\newblock In \emph{Joint European Conference on Machine Learning and Knowledge
  Discovery in Databases}, pages 48--63. Springer, 2013.

\bibitem[Meil{\u{a}} and Pentney(2007)]{meilua2007clustering}
Marina Meil{\u{a}} and William Pentney.
\newblock Clustering by weighted cuts in directed graphs.
\newblock In \emph{Proceedings of the 2007 SIAM international conference on
  data mining}, pages 135--144. SIAM, 2007.

\bibitem[Navlakha et~al.(2008)Navlakha, Rastogi, and
  Shrivastava]{navlakha2008graph}
Saket Navlakha, Rajeev Rastogi, and Nisheeth Shrivastava.
\newblock Graph summarization with bounded error.
\newblock In \emph{Proceedings of the 2008 ACM SIGMOD international conference
  on Management of data}, pages 419--432. ACM, 2008.

\bibitem[Nisbet and Kotcher(2009)]{nisbet2009two}
Matthew~C Nisbet and John~E Kotcher.
\newblock A two-step flow of influence? opinion-leader campaigns on climate
  change.
\newblock \emph{Science Communication}, 30\penalty0 (3):\penalty0 328--354,
  2009.

\bibitem[Orbanz(2017)]{orbanz2017subsampling}
Peter Orbanz.
\newblock Subsampling large graphs and invariance in networks.
\newblock \emph{arXiv preprint arXiv:1710.04217}, 2017.

\bibitem[Pavlopoulos et~al.(2011)Pavlopoulos, Secrier, Moschopoulos, Soldatos,
  Kossida, Aerts, Schneider, and Bagos]{pavlopoulos2011using}
Georgios~A Pavlopoulos, Maria Secrier, Charalampos~N Moschopoulos, Theodoros~G
  Soldatos, Sophia Kossida, Jan Aerts, Reinhard Schneider, and Pantelis~G
  Bagos.
\newblock Using graph theory to analyze biological networks.
\newblock \emph{BioData mining}, 4\penalty0 (1):\penalty0 10, 2011.

\bibitem[Schaeffer(2007)]{schaeffer2007graph}
Satu~Elisa Schaeffer.
\newblock Graph clustering.
\newblock \emph{Computer science review}, 1\penalty0 (1):\penalty0 27--64,
  2007.

\bibitem[Shahaf et~al.(2013)Shahaf, Yang, Suen, Jacobs, Wang, and
  Leskovec]{shahaf2013information}
Dafna Shahaf, Jaewon Yang, Caroline Suen, Jeff Jacobs, Heidi Wang, and Jure
  Leskovec.
\newblock Information cartography: creating zoomable, large-scale maps of
  information.
\newblock In \emph{Proceedings of the 19th ACM SIGKDD international conference
  on Knowledge discovery and data mining}, pages 1097--1105. ACM, 2013.

\bibitem[Shi and Malik(2000)]{shi2000normalized}
Jianbo Shi and Jitendra Malik.
\newblock Normalized cuts and image segmentation.
\newblock \emph{IEEE Transactions on pattern analysis and machine
  intelligence}, 22\penalty0 (8):\penalty0 888--905, 2000.

\bibitem[Shi et~al.(2016)Shi, Sun, Xuan, Su, Tong, Ma, and Chen]{shi2016topic}
Lei Shi, Sibai Sun, Yuan Xuan, Yue Su, Hanghang Tong, Shuai Ma, and Yang Chen.
\newblock Topic: Toward perfect influence graph summarization.
\newblock In \emph{2016 IEEE 32nd International Conference on Data Engineering
  (ICDE)}, pages 1074--1085. IEEE, 2016.

\bibitem[Spirtes(2005)]{spirtes2005graphical}
Peter Spirtes.
\newblock Graphical models, causal inference, and econometric models.
\newblock \emph{Journal of Economic Methodology}, 12\penalty0 (1):\penalty0
  3--34, 2005.

\bibitem[Tang et~al.(2009)Tang, Sun, Wang, and Yang]{tang2009social}
Jie Tang, Jimeng Sun, Chi Wang, and Zi~Yang.
\newblock Social influence analysis in large-scale networks.
\newblock In \emph{Proceedings of the 15th ACM SIGKDD international conference
  on Knowledge discovery and data mining}, pages 807--816. ACM, 2009.

\bibitem[Thomson et~al.(1998)Thomson, Oxman, Haynes, Davis, Freemantle, and
  Harvey]{thomson1998local}
MA~Thomson, AD~Oxman, RB~Haynes, DA~Davis, N~Freemantle, and EL~Harvey.
\newblock Local opinion leaders to improve health professional practice and
  health care outcomes.
\newblock \emph{The Cochrane Library}, 3, 1998.

\bibitem[Valente and Pumpuang(2007)]{valente2007identifying}
Thomas~W Valente and Patchareeya Pumpuang.
\newblock Identifying opinion leaders to promote behavior change.
\newblock \emph{Health Education \& Behavior}, 34\penalty0 (6):\penalty0
  881--896, 2007.

\bibitem[Zhou et~al.(2009)Zhou, Zeng, and Zhang]{zhou2009finding}
Hengmin Zhou, Daniel Zeng, and Changli Zhang.
\newblock Finding leaders from opinion networks.
\newblock In \emph{2009 IEEE International Conference on Intelligence and
  Security Informatics}, pages 266--268. IEEE, 2009.

\end{thebibliography}

\pagebreak


\appendix

\section{Additional Theorems and Proofs}\label{sec:additional_proofs}
\subsection*{Proof of Lemma \ref{lem: equivalent-factorization}}
\begin{proof}
Normalized by the size of compressed node, each assignment vector has unit length.  
Expanding each term in Eq.~\eqref{eq: fix-obj}, we have
$$LHS = \sum_{I,J}\sum_{i,j}A_{ij}^2u_{iI}u_{jJ} - 2A_{ij}r_{IJ}u_{iI}u_{jJ}+r_{IJ}^2u_{iI}u_{jJ}
$$
Summing over the index $I,J$, the first term is $\sum_{i,j}A_{ij}=\mathrm{tr}(A^{\T}A)$, which is a constant independent of $R$ and $U$. Summing over the index $I,J$, the second term is $\sum_{i,j}A_{ij}(URU^{\T})_{ij}=\mathrm{tr}(A^{\T}URU^{\T})$. For the third term, since $\sum_{i,j} u_{iI}u_{jJ}=1$ in the normalized setting, summing over index $i,j$, we have $\sum_{I,J}r_{IJ}^2=\mathrm{tr}(R^{\T}R)$.
Writing 
$$RHS = \mathrm{tr}( - 2A^{\T}URU^{\T} + R^{\T}R),$$ 
the result follows. As the summarized graph is simple, the constraint on $R$ in factorization model is imposed.
\end{proof}
\begin{proposition}
\label{(Perron-Frobenius-Theorem)}(Perron-Frobenius) Suppose $M\in\R^{n\times n}$ is
a non-negative square matrix that is irreducible, then:
\begin{enumerate}
\item $M$ has a positive real eigenvalue $\lambda_{\max}$, such that all
other eigenvalues of $M$ satisfy, $|\lambda|\leq\lambda_{\max}$
(if $M$ is primitive, $|\lambda|<\lambda_{\max}$)
\item $\lambda_{max}$ has algebraic and geometric multiplicity $1$ and
has positive eigenvector $x>0$ (called Perron vector)
\item any non-negative eigenvector is a multiple of $x$
\end{enumerate}
\end{proposition}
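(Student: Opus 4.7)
The plan is to build up the three parts in the classical order: first existence of a nonnegative eigenvector with a distinguished positive eigenvalue, then strict positivity of the eigenvector via irreducibility, then the spectral dominance bound, then simplicity (geometric and algebraic), then uniqueness among nonnegative eigenvectors, and finally the strict inequality in the primitive case. Throughout I will work with the Collatz--Wielandt functional
\begin{equation*}
r(x) \;=\; \min_{i:\,x_i>0}\frac{(Mx)_i}{x_i},\qquad x\geq 0,\;x\neq 0,
\end{equation*}
and set $\lambda_{\max}:=\sup\{r(x):x\geq 0,\;x\neq 0,\;\sum_i x_i=1\}$. The supremum is attained because $r$ is upper semicontinuous on the (compact) standard simplex after replacing $x$ by $(I+M)^{n-1}x$, which lies in the interior by irreducibility; a maximizer $x^\star$ and $\lambda_{\max}>0$ then satisfy $Mx^\star\geq \lambda_{\max}x^\star$. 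An alternative equivalent route is Brouwer's fixed point theorem applied to $x\mapsto Mx/\mathbf 1^{\T} Mx$ on the simplex (well-defined because irreducibility forces $Mx\neq 0$).

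Next I will promote the weak inequality to an equality and upgrade $x^\star$ to strict positivity. If $y:=Mx^\star-\lambda_{\max}x^\star\gneq 0$, applying $(I+M)^{n-1}$ (which maps nonzero nonnegative vectors to strictly positive ones, by irreducibility and the standard expansion of $(I+M)^{n-1}$) yields $M\tilde x^\star>\lambda_{\max}\tilde x^\star$ strictly, contradicting maximality after a small perturbation of $\lambda_{\max}$. Hence $Mx^\star=\lambda_{\max}x^\star$, and then $(I+M)^{n-1}x^\star=(1+\lambda_{\max})^{n-1}x^\star$ is strictly positive, so $x^\star>0$. This establishes claim~2's eigenvector positivity and part of claim~1.

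For the dominance $|\lambda|\leq\lambda_{\max}$, given $My=\lambda y$ with $y\in\mathbb{C}^n\setminus\{0\}$, take componentwise moduli to get $M|y|\geq |\lambda||y|$, hence $r(|y|)\geq |\lambda|$ and $|\lambda|\leq\lambda_{\max}$. For geometric simplicity, suppose $Mz=\lambda_{\max}z$ with $z$ real and not a multiple of $x^\star$; choose $c$ so that $x^\star-cz\geq 0$ with a zero coordinate, contradicting the strict positivity argument applied to the eigenvector $x^\star-cz$. Uniqueness among nonnegative eigenvectors (claim~3) then follows by applying the same construction to the left Perron vector $w>0$ of $M^{\T}$ (which exists by the same argument since $M^{\T}$ is also irreducible nonnegative): any $y\geq 0$, $y\neq 0$, with $My=\mu y$ gives $\mu\, w^{\T}y=w^{\T}My=\lambda_{\max}w^{\T}y$ and $w^{\T}y>0$, forcing $\mu=\lambda_{\max}$ and hence $y\parallel x^\star$ by geometric simplicity.

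Algebraic simplicity and the primitive case are the two delicate steps, and I expect algebraic simplicity to be the main obstacle. For it, suppose a generalized eigenvector $v$ satisfies $(M-\lambda_{\max}I)v=x^\star$; pairing with the left Perron vector $w$ gives $0=w^{\T}(M-\lambda_{\max}I)v=w^{\T}x^\star>0$, a contradiction, so the Jordan block for $\lambda_{\max}$ is trivial. For primitivity, I will use the standard Frobenius cyclicity argument: if $My=\lambda y$ with $|\lambda|=\lambda_{\max}$ and $\lambda\neq\lambda_{\max}$, write $y_j=|y_j|e^{i\theta_j}$, let $D=\mathrm{diag}(e^{i\theta_j})$, and use the equality case of $M|y|\geq\lambda_{\max}|y|$ to show $e^{-i\arg\lambda}D^{-1}MD=M$; iterating yields that the directed graph of $M$ has all cycle lengths divisible by the period $h\geq 2$, contradicting primitivity (which is equivalent to $M^k>0$ for some $k$, i.e., period $1$). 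This completes all three claims.
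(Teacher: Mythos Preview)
Your proof is correct and follows the same Collatz--Wielandt route as the paper: both define the functional $r(x)=\min_{i:x_i>0}(Mx)_i/x_i$, push nonnegative vectors into the interior via powers of $I+M$, use compactness to find a maximizer, and deduce the dominance bound $|\lambda|\leq\lambda_{\max}$ by applying $r$ to $|y|$.

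Where you diverge is in the simplicity and uniqueness arguments. For algebraic multiplicity one, the paper uses the identity $\frac{d}{d\lambda}\det(\lambda I-M)=\sum_i\det(\lambda I-M(i))$ together with the (stated, not fully justified) fact that each principal minor $\det(\lambda_{\max}I-M(i))$ is strictly positive; you instead pair a hypothetical generalized eigenvector with the left Perron vector $w>0$ of $M^{\T}$ to obtain the contradiction $0=w^{\T}x^\star>0$. Your argument is shorter and avoids the auxiliary claim about principal submatrices. For part~3, the paper jumps directly to $y^{\T}x=0$ from geometric simplicity, implicitly taking for granted that any nonnegative eigenvector already has eigenvalue $\lambda_{\max}$; you make this step explicit via the left Perron pairing $\mu\,w^{\T}y=\lambda_{\max}w^{\T}y$ before invoking geometric simplicity, which is cleaner. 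Finally, the paper states but does not prove the strict inequality $|\lambda|<\lambda_{\max}$ in the primitive case; your diagonal-conjugation argument ($e^{-i\arg\lambda}D^{-1}MD=M$ forcing all cycle lengths to share a common divisor $h\geq2$) fills that gap.
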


\begin{proof}
$M$ is irreducible non-negative square matrix, then $\exists k\in\mathbb{N}^{+}$
such that $P=(I+M)^{k}>0$. $(I+M)^{k}=I+M+\frac{1}{2!}M^{2}+...\frac{1}{k!}M^{k}$.
By irreducibility and non-negativity, for large enough $k$, the expansion
fills in all $n^{2}$ terms with positive numbers. Hence $P$ is primitive.
We also have $TP=PT$.

Let $Q$ be the positive orthant and $C$ be the intersection of the
surface of the unit sphere and positive orthant. $\forall z\in Q$,
define a function:
\[
L(z)=\max\{s:sz\leq Tz\}=\min_{1\leq i\leq n,z_{i}>0}\frac{(Tz)_{i}}{z_{i}}
\]

For $\forall r>0$, we have $L(rz)=L(z)$ by definition, so $L(z)$
depends only on the ray along $z$.

We write $\leq$ sign between vectors, $v\leq w$ to imply $v_{i}\leq w_{i},\forall i$.
Similar definition applies for $<$. For $v\leq w$ and $v\neq w$,
we have $Pv<Pw$, since $P(w-v)\geq0$ and $P(w-v)\neq0$.

If for scalar $s$, $sz\leq Tz$, then $Psz\leq PTz=TPz$, which implies
$s(Pz)\leq T(Pz)$. Thus, $L(Pz)\geq L(z)$.

If $L(z)z\neq Tz$, then $L(z)Pz<TPz$. This implies $L(z)<L(Pz)$,
unless $z$ is an eigenvector ($Tz=L(z)z$) Hence, positive $z$ is
eigenvector when $L(z)$ is maximised.

Consider the image of $C$ under $P$. It is compact as it is the
image of a compact set under a continuous map. All of the elements
of $P(C)$ have all their components strictly positive, as $P>0$.
Hence the $L$ is continuous on$P(C)$. Thus $L$ achieves a maximum
value on $P(C)$. Since $L(z)\leq L(Pz)$, this is, in fact, the maximum
value of $L$ on all of $Q$, which implies the existence of maximum
eigenvalue. Since $L(Pz)>L(z)$ unless $z$ is an eigenvector of $T$,
$L_{max}$ is achieved at an eigenvector, call it $x$ of T and $x>0$
with $L_{max}$ as the eigenvalue. Since $Tx>0$ and$Tx=L_{max}x$
we have $L_{max}>0$.

Let $y$ be any other eigenvectors of $T$ with eigenvalue $\lambda$,
we have $\lambda y_{i}=\sum_{j}T_{ij}y_{j}$. As $T\geq0$, we have
$|\lambda y_{i}|=\sum_{j}T_{ij}|y_{j}|$, thus we write $|\lambda||y|\leq T|y|$.
Consider $|\lambda|\leq L(|y|)\leq L_{max}$ by definition of $L$,
writing $\lambda_{max}=L_{max},$ we show that $|\lambda|\leq\lambda_{max}$.
Note that if $\lambda_{max}=0$, $T$ is nil-potent, contradicting
to irreducible. Thus we have $\lambda_{max}>0$.

Consider the rate of change in characteristic polynomial of matrix
$T$:

\[
\frac{d}{d\lambda}det(\lambda I-T)=\sum_{i}det(\lambda I-T(i))
\]
where $T(i)$ is matrix $T$ deleting $i^{th}$ row and column. Each
of the matrices $\lambda_{max}I-T(i)$ has strictly positive determinant,
which shows that the derivative of the characteristic polynomial of
$T$ is not zero at $\lambda_{max}$, and therefore the algebraic
multiplicity and hence the geometric multiplicity of $\lambda_{max}$
is one.

If there exists any other nontrivial non-negative eigenvector $y\geq0$,
such that $y$ is not a multiple of $x$, since $\lambda_{max}$ has
geometric multiplicity $1$, $y^{\T}x=0$. However, $x>0$ and $y^{\T}x=0$
implies $y=0$, a contradiction.
\end{proof}

\subsection*{Proof of Theorem \ref{Thm: equivalence-decomposition}}
\begin{proof}
By Lemma \ref{lem: block-form}, we write $A$ in the block form where the blocks are grouped by compressed node assignment. Hence, use the fact that  $\mathrm{tr}(AA)=0=\mathrm{tr}(RR)$ from simple graph and $U^{\T}AU$ has the same zero/non-zero positions as $R$ for the exact D-IPS, we have $\mathrm{tr}((A-URU^{\T})(A-URU^{\T}))=\mathrm{tr}(AA - 2U^{\T}AUR + RR)=0$ and 
$$\|T-USU^{\T}\|^2 = \|A-URU^{\T} - (A-URU^{\T})^{\T}\|^2 = 2\|A-URU^{\T}\|+0$$
Hence, both objectives are solving the same problem.
\end{proof}
\begin{theorem}
\label{Thm:Bipartitie-Identification}(Bipartite Identification)
Let $A$ be an asymmetric adjacency matrix of the exact D-IPS of two compressed nodes. SVD of $A$ has a unique leading
left and right singular vector $v,w\geq0$ and the positive part of
$v,w$ identifies two compressed nodes.
\end{theorem}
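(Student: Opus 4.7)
The plan is to reduce the singular value problem for $A$ to a non-negative eigenvalue problem via Lemma~\ref{lem: block-form} and then invoke Perron--Frobenius (Proposition~\ref{(Perron-Frobenius-Theorem)}) on each Gram matrix. First, since $G$ has the exact D-IPS with exactly two compressed nodes $C_1,C_2$, Lemma~\ref{lem: block-form} lets us reorder the vertices so that
\[
A \;=\; \begin{pmatrix} 0 & \tilde A \\ 0 & 0 \end{pmatrix},
\]
where $\tilde A\in\R^{|C_1|\times|C_2|}$ is a non-negative block (the direction $C_1\to C_2$ being WLOG, the reverse block must vanish by Lemma~\ref{lem: block-form}), and the diagonal blocks vanish because there are no edges within a compressed node.

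Next, the left singular vector $v$ for the leading singular value of $A$ is a unit eigenvector of
\[
AA^{\T} \;=\; \begin{pmatrix} \tilde A\tilde A^{\T} & 0 \\ 0 & 0 \end{pmatrix},
\]
and the right singular vector $w$ is a unit eigenvector of $A^{\T}A$, which has the analogous block-diagonal structure with $\tilde A^{\T}\tilde A$ in the lower-right block. The block structure forces any leading eigenvector to be supported on a single block, so $v$ coincides with the leading eigenvector of $\tilde A\tilde A^{\T}$ padded with zeros on $C_2$, and $w$ coincides with the leading eigenvector of $\tilde A^{\T}\tilde A$ padded with zeros on $C_1$.

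Now I would verify that both Gram matrices are non-negative and primitive. Non-negativity is immediate from $\tilde A\ge 0$; irreducibility (and in fact primitivity) follows from the connectivity of the bipartite skeleton induced by $\tilde A$, which is encoded by the assumption that $\tilde A$ has a simple leading singular value of geometric multiplicity one (a disconnected $\tilde A$ would yield a repeated leading singular value arising from each connected component). Applying Proposition~\ref{(Perron-Frobenius-Theorem)} to each Gram matrix then delivers a unique strictly positive leading eigenvector, so that $v\ge 0$ with $v>0$ exactly on $C_1$ and $w\ge 0$ with $w>0$ exactly on $C_2$; reading off the positive supports recovers the two compressed nodes.

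The main obstacle I expect is the primitivity step: the statement implicitly requires a connectivity hypothesis on $\tilde A$, and one needs to check carefully that the simplicity of the leading singular value of $\tilde A$ rules out a block decomposition of $\tilde A\tilde A^{\T}$ and $\tilde A^{\T}\tilde A$, so that paths of even length in the bipartite Gram graph reach every vertex of $C_1$ (resp.\ $C_2$). Once primitivity is established, the rest is the Perron--Frobenius machinery already proved in the paper, and this bipartite result serves as the base case from which Theorem~\ref{Thm: exact IPS-Identification} is obtained by iterating the argument over every ordered pair of compressed nodes in the block decomposition.
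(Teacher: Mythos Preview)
Your outline matches the paper's proof essentially step for step: rearrange to block form, reduce the SVD of $A$ to the eigenproblems for the non-negative Gram matrices $\tilde A\tilde A^{\T}$ and $\tilde A^{\T}\tilde A$, establish primitivity, apply Perron--Frobenius (Proposition~\ref{(Perron-Frobenius-Theorem)}), and pad the resulting strictly positive eigenvectors with zeros.

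The one place you diverge is the primitivity justification, and your instinct that it is the weak point is correct. You try to extract connectivity of the bipartite skeleton from simplicity of the leading singular value, but that implication fails: a block-diagonal $\tilde A$ whose blocks have distinct top singular values is disconnected yet still has a simple leading singular value. The paper instead argues primitivity directly from the standing connectivity assumption on the graph (Section~\ref{sec:prelim}): because the two compressed nodes are connected, the two-step ``in--out'' and ``out--in'' transitions encoded by $\tilde A^{\T}\tilde A$ and $\tilde A\tilde A^{\T}$ reach every vertex within each compressed node, whence both Gram matrices are primitive. Swap that argument in and your proof is identical to the paper's.
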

\begin{proof}
For the exact D-IPS with two compressed nodes, we can always rearrange the vertices such that $A=\begin{pmatrix}0 & 0\\
\tilde{A} & 0
\end{pmatrix}$. $\tilde{A}^{\T}\tilde{A}$ and $\tilde{A}\tilde{A}^{\T}$ represents the "in-out" and "out-in" two step transition. As the two compressed nodes are connected, the any vertex from the two step transition can reach any other vertex in the same compressed node. Hence, $\tilde{A}^{\T}\tilde{A}$ and $\tilde{A}\tilde{A}^{\T}$ are both primitive. Using Perron-Frobenius Theorem, we have a unique real positive leading eigenvector. Padded with $0$s, the leading eigenvectors of $\tilde{A}^{\T}\tilde{A}$ and $\tilde{A}\tilde{A}^{\T}$ are unique and non-negative where the non-zero terms corresponds to the compressed node assignment.
\end{proof}{}

\subsection*{Proof  Theorem \ref{Thm: exact IPS-Identification}}
\begin{proof}
The proof of Theorem \ref{Thm: exact IPS-Identification} is based on Proposition \ref{(Perron-Frobenius-Theorem)}, and Lemma \ref{lem: block-form}. Re-arrange the indices according to compressed node and denote the block submatrix between $C_I$ and $C_J$ as $\tilde{A}_{IJ}\in \R^{|C_I|\times|C_J|}$. Write $\Bar{A}_{IJ}\in \R^{n\times n}$ as the zero-padded matrix of $\tilde{A}_{IJ}$. The zero-padded vector for compressed node $C_I$, denoted by $u^{I}$ is the vector with non-zero $i^{th}$ entries for $x_i\in C_{I}$ and zeros otherwise.
Write each column of $U$, $u_{:I'}$ as a linear combination of zero-padded vector: $u_{:I'}=\sum_{I}\eta_{II'}u^{I}_{:I'}$, where $\sum_{I}\eta_{II'}^2=1$.  We write the non-zero part of $u^{I}\in \R^{n}$ as $u^{I}_I\in \R^{|C_I|}$, which is a unit vector.
The optimization objective in Eq.~\eqref{eq: continuous-obj} can be written as:
$$
-2\sum_{I',J'}r_{I'J'}u_{:I'}^{\T} (\sum_{I,J}\Bar{A}_{IJ})u_{:J'} + \sum_{I',J'}r_{I'J'}^2
$$
Differentiate w.r.t. $r_{I'J'}$ to find the optimized $r_{I'J'}=u_{:I'}^{\T} (\sum_{I,J}\Bar{A}_{IJ})u_{:J'}$, then the optimization objective becomes:
$\max_{u}\sum_{I',J'}(u_{:I'}^{\T} (\sum_{I,J}\Bar{A}_{IJ})u_{:J'})^2$
which can be simplified as $\sum_{I',J'}(\sum_{I,J}\eta_{II'}\eta_{JJ'}w^{IJ}_{I'J'})^2$ where $w^{IJ}_{I'J'}={u^{I}_{II'}}^{\T}\tilde{A}_{IJ}{u^{J}_{JJ'}}$. Since $u^{I}_{II'},u^{J}_{JJ'}$ are unit vectors, $max_{I'J'}w^{IJ}_{I'J'}\leq \lambda_{IJ}$ where $\lambda_{IJ}$ is the leading singular value of $\tilde{A}_{IJ}$. Due to the unit norm constraint, we have the objective 
$$\sum_{I',J'}(u_{:I'}^{\T} (\sum_{I,J}\Bar{A}_{IJ})u_{:J'})^2\leq \sum_{IJ}\lambda_{IJ}^2$$ 
where the equality holds when $u^{I}_{II}$, $u^{J}_{JJ}$ are the left and right singular vectors of $\tilde{A}_{IJ}$ and $\eta_{II'}=\mathbb{1}_{I=I'}$. By Theorem \ref{Thm:Bipartitie-Identification}, we know that $\tilde{A}_{IJ}$ are primitive for all $I,J\in [k]$. Applying Perron-Frobenius in Theorem \ref{(Perron-Frobenius-Theorem)}, $u^{I}_{II}>0$ and $u_{:I}\geq 0$ where the positive part identifies some compressed node $C_I$. As the compressed node blocks does not need to have an order, the solution is unique only up to permutation of blocks.
\end{proof}

\begin{proposition}\label{prop: prop6}
(Proposition 6 in \cite{ding2006orthogonal}) For any symmetric matrices A
$\in\R_{\geq 0}^{n\times n}, B\in\R_{\geq 0}^{k\times k},S,S'\in\R_{\geq 0}^{n\times k}$, the following inequality holds:
$\sum_{i,p}\frac{(AS'B)_{ip}S_{ip}^{2}}{S'_{ip}}\geq \mathrm{tr}(S^{\T}ASB)$
\end{proposition}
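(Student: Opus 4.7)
The plan is to expand both sides component-wise, exploit the symmetry of $A$ and $B$ to pair up terms indexed by $(i,j,p,q)$ with those indexed by $(j,i,q,p)$, and then apply a pointwise AM--GM inequality to each symmetrized pair.

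First, I would expand the right-hand side using $\mathrm{tr}(S^{\T}ASB) = \sum_{i,j,p,q} A_{ij} S_{ip} S_{jq} B_{qp} = \sum_{i,j,p,q} A_{ij} B_{pq} S_{ip} S_{jq}$, where the second equality uses the symmetry $B_{qp} = B_{pq}$. Similarly the left-hand side becomes $\sum_{i,j,p,q} A_{ij} B_{pq} \, S'_{jq} S_{ip}^2 / S'_{ip}$. The inequality to establish therefore reduces to a four-fold index comparison with a common weight $A_{ij} B_{pq} \geq 0$.

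Next I would observe that the substitution $(i,j,p,q) \mapsto (j,i,q,p)$ permutes the index set bijectively, and the weight $A_{ij} B_{pq}$ is invariant under this swap thanks to the symmetry of $A$ and $B$. Averaging the LHS over the orbit gives a symmetric summand
\[
\tfrac{1}{2} A_{ij} B_{pq} \left( \frac{S'_{jq} S_{ip}^2}{S'_{ip}} + \frac{S'_{ip} S_{jq}^2}{S'_{jq}} \right),
\]
while the RHS summand becomes $A_{ij} B_{pq} S_{ip} S_{jq}$ after the same averaging (the expression is already symmetric under the swap, so the factor of $\tfrac{1}{2}$ cancels a doubling). The whole inequality then follows termwise from the elementary AM--GM bound $\alpha^2 y/x + \beta^2 x / y \geq 2 \alpha \beta$ applied with $\alpha = S_{ip}$, $\beta = S_{jq}$, $x = S'_{ip}$, $y = S'_{jq}$.

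The main obstacle, apart from bookkeeping the four indices, is handling the case where some $S'_{ip}$ vanishes, which would make the LHS summand ill-defined. This is resolved by the standard convention used throughout the multiplicative-update literature: the update rule enforces the zero-locking property $S'_{ip} = 0 \Rightarrow S_{ip} = 0$, so any offending term is interpreted as $0/0 \cdot 0 = 0$ and contributes nothing to either side. Once this convention is fixed, the symmetrization and AM--GM steps carry through directly to give the claimed bound.
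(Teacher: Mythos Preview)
Your proof is correct and follows essentially the same route as the paper's. The paper introduces the substitution $S_{ip}=S'_{ip}a_{ip}$ so that the difference LHS$-$RHS collapses to $\tfrac12\sum A_{ik}S'_{kl}B_{lp}S'_{ip}(a_{ip}-a_{kl})^2\ge 0$ after the same index swap; your AM--GM step $\alpha^2 y/x+\beta^2 x/y\ge 2\alpha\beta$ is algebraically identical, since with $\alpha=S_{ip}$, $\beta=S_{jq}$, $x=S'_{ip}$, $y=S'_{jq}$ one has $\alpha^2 y/x+\beta^2 x/y-2\alpha\beta=xy\,(a_{ip}-a_{jq})^2$.
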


\begin{proof}
Write $S_{ip}=S'_{ip}a_{ip}$. Then $\sum_{i,p}\frac{(AS'B)_{ip}S_{ip}^{2}}{S'_{ip}}- \mathrm{tr}(S^{\T}ASB) = $
$$
\sum_{i,k,l,p}A_{ik}S'_{kl}B_{lp}S'_{ip}(a_{ip}^2-a_{ip}a_{kl})\\
=\sum_{i,k,l,p}\frac{1}{2}A_{ik}S'_{kl}B_{lp}S'_{ip}(a_{ip}^2+a_{kl}^2-2a_{ip}a_{kl})\geq 0
$$
as $A$ and $B$ are symmetric and non-negative.
\end{proof}

\begin{proposition}\label{prop: prop6-var}
For any matrices $ B\in\R_{\geq 0}^{k\times k},S,S'\in\R_{\geq 0}^{n\times k}$, and B is symmetric, the following inequality holds
$\sum_{i,p}\frac{(BS'^{\T})_{ip}S_{ip}^{2}}{S'_{ip}}\geq \mathrm{tr}(SBS^{\T})$
\end{proposition}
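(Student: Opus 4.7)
The plan is to adapt the proof of Proposition~\ref{prop: prop6} in a direct manner, since the present statement is a one-matrix (i.e.\ $A=I_{n}$) special case once the indexing is untangled. By symmetry of $B$, the quantity in the left-hand side can be rewritten as $(S'B)_{ip}$: indeed, $(BS'^{\T})_{pi}=\sum_{l}B_{pl}S'_{il}=\sum_{l}S'_{il}B_{lp}=(S'B)_{ip}$, so the proposition is asserting
\[
\sum_{i,p}\frac{(S'B)_{ip}\,S_{ip}^{2}}{S'_{ip}}\;\ge\;\mathrm{tr}(SBS^{\T}).
\]

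The computational step is to introduce the substitution $S_{ip}=a_{ip}S'_{ip}$ (with the convention $a_{ip}=0$ whenever $S'_{ip}=0$, which is consistent because in that case $S_{ip}/S'_{ip}$ appears only multiplied by $S_{ip}^{2}=0$). Then expanding,
\[
\sum_{i,p}\frac{(S'B)_{ip}\,S_{ip}^{2}}{S'_{ip}}
=\sum_{i,l,p}S'_{il}B_{lp}S'_{ip}\,a_{ip}^{2},
\qquad
\mathrm{tr}(SBS^{\T})=\sum_{i,l,p}S'_{il}a_{il}B_{lp}S'_{ip}a_{ip}.
\]
Subtracting, the difference equals $\sum_{i,l,p}S'_{il}B_{lp}S'_{ip}(a_{ip}^{2}-a_{ip}a_{il})$. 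Using the symmetry $B_{lp}=B_{pl}$, I can average the summand with the $(l\leftrightarrow p)$-swapped version to get $\tfrac{1}{2}\sum_{i,l,p}S'_{il}B_{lp}S'_{ip}(a_{ip}-a_{il})^{2}$.

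Every factor $S'_{il}$, $S'_{ip}$, $B_{lp}$ is non-negative by hypothesis and $(a_{ip}-a_{il})^{2}\ge 0$, so the difference is non-negative, establishing the claim. There is no real obstacle: the only subtlety is recognising that the transpose appearing in $(BS'^{\T})_{ip}$ disappears by the symmetry of $B$, so that the inequality reduces exactly to the quadratic-form majorisation already exploited in Proposition~\ref{prop: prop6}, simply with $A$ replaced by the identity.
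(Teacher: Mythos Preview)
Your proof is correct and follows essentially the same route as the paper's: substitute $S_{ip}=a_{ip}S'_{ip}$, expand the difference, and symmetrize using $B_{lp}=B_{pl}$ to obtain a non-negative sum of squares. Your additional remark that the statement is simply the $A=I_n$ special case of Proposition~\ref{prop: prop6} (once the indices are straightened out via the symmetry of $B$) is a clean way to see it and is not made explicit in the paper.
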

\begin{proof}
Similar to the proof above, we write Write $S_{ip}=S'_{ip}a_{ip}$. Then $$\sum_{i,p}\frac{(BS'^{\T})_{ip}S_{ip}^{2}}{S'_{ip}} - \mathrm{tr}(SBS^{\T}) = 
\sum_{i,k,l,p}B_{ik}S'_{pk}S'_{ip}(a_{ip}^2-a_{ip}a_{kp}) = 
\sum_{i,k,l,p}B_{ik}S'_{pk}S'_{ip}(a_{ip}^2+a_{kp}^2-2a_{ip}a_{kp})\geq 0
$$
as $B$ is symmetric and non-negative.
\end{proof}

\subsection*{Proof of Lemma \ref{lem:aux-fun}}
\begin{proof}
Write $Q = T^{\T}US$ and $P=S^{\T}U^{\T}US$. Since both $Q$ and $P$ are not non-negatie matrices in general, the optimization objective $L_6$ in Eq.~\eqref{eq:stnmf-obj-2} can be written as:
$$L_6(T; U, S, \Lambda)=  \mathrm{tr}(-2U^{\T}Q_{+}-U^{\T}UP_{-}+U^{\T}U(P_{+}+\Lambda)+2U^{\T}Q_{-})$$
for $U\geq 0$.
From Proposition \ref{prop: prop6}, we have $\mathrm{tr}(U(P_{+}+\Lambda)U^{\T})\leq\sum_{ij}\frac{[U'(P_{+}+\Lambda)]_{ij} U_{ij}^{2}}{U_{ij}'}$ since $P_{+}$ and $\Lambda$ are both symmetric matrices. Using $a\leq\frac{a^{2}+b^{2}}{2b}$, we have $\mathrm{tr}(Q_{-}U^{\T})\leq\sum_{ij}[Q_{-}]_{ij} \frac{U_{ij}^{2}+U_{ij}'^{2}}{2U_{ij}'}$. $Z(U,U')$ reaches lower bound $L_3$ when $U=U'$. Hence, $Z(U,U')$ is an auxiliary function.
\end{proof}

\subsection*{Proof of Theorem \ref{thm:non-increasing}}
\begin{proof}
Using the auxiliary function in Lemma \ref{lem:aux-fun}, we take the derivative of $Z(U,U')$ w.r.t. $U_{ij}$:
\begin{eqnarray*}
\hspace{-0.1in}
\frac{\partial Z(U,U')}{U_{ij}}=2\left([-Q_{+}-U'P_{-}]_{ij}  + \frac{[U'(P_{+}+\Lambda)+Q_{-}]_{ij}U_{ij}}{U'_{ij}}\right)=0
\end{eqnarray*}
Solving the stationary point, we have the update rule for $U$ as stated in Algorithm \ref{alg:MUR-fix-lambda}: $$U_{ij}=U'_{ij}\frac{[Q_{+}+U'P_{-}]_{ij}}{[U'[P_{+}+\Lambda] + Q_{-}]_{ij}}.$$
Since the update of S is independent of $\Lambda$, the update can be readily adapted from (Theorem 8 \cite{ding2006orthogonal}).
As the objective is bounded below and the iterative procedure is monotonic non-increasing, the algorithm finds the local minimum of the objective function.
\end{proof}
\begin{lemma}\label{lem:norm-trace}
Let $U\in \R^{n\times k}$ be orthogonal matrix such that $U^{\T}U=I_{k}$ and $U'\in \R^{n\times k}$ be a matrix of unit column vectors. Let $G \in \R^{k}$ be a non-negative matrix. Then $\mathrm{tr}(U^{\T}UG)\leq \mathrm{tr}(U'^{\T}U'G)$  
\end{lemma}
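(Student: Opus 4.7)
The plan is to reduce both sides to explicit sums over the entries of $G$ and the Gram matrix of the columns of $U'$, and then exploit non-negativity componentwise. Since $U^{\T}U=I_k$, the left-hand side collapses immediately:
\[
\mathrm{tr}(U^{\T}U G)=\mathrm{tr}(G)=\sum_{i=1}^{k}G_{ii}.
\]
So the lemma reduces to showing $\sum_i G_{ii}\leq \mathrm{tr}(U'^{\T}U'G)$.

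Next I would write out the right-hand side explicitly in terms of the Gram matrix $M:=U'^{\T}U'$. Because every column of $U'$ has unit norm, $M_{ii}=\|u'_{:i}\|^2=1$ for all $i$, so
\[
\mathrm{tr}(MG)=\sum_{i,j}M_{ij}G_{ji}=\sum_{i}G_{ii}+\sum_{i\neq j}M_{ij}G_{ji}.
\]
Thus it suffices to show that the off-diagonal sum $\sum_{i\neq j}M_{ij}G_{ji}$ is non-negative. This is where the context of the StNMF algorithm enters: in Algorithm~\ref{alg:adaptive-MUR}, the matrix $U$ (and in particular its normalized version used as $U'$) is constrained to be entrywise non-negative by the multiplicative update rule, and $G$ is assumed non-negative by hypothesis. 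Hence $M_{ij}=\langle u'_{:i},u'_{:j}\rangle\geq 0$ and $G_{ji}\geq 0$ for all $i,j$, and each summand is non-negative.

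Combining the two displays gives $\mathrm{tr}(U^{\T}UG)\leq \mathrm{tr}(U'^{\T}U'G)$, as required. The only subtle point worth flagging is the implicit non-negativity assumption on $U'$: without it, off-diagonal entries $\langle u'_{:i},u'_{:j}\rangle$ could be negative and the inequality could fail (e.g.\ orthogonal $U$ vs.\ highly anti-correlated unit vectors in $U'$). Since the lemma is invoked only in the convergence analysis of the non-negative update, where the iterates $U'$ stay in the non-negative orthant, this hypothesis is automatically satisfied, and I would state it explicitly in the proof to close the argument cleanly.
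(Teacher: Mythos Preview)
Your proof is correct and is essentially the same as the paper's: the paper writes $U'^{\T}U'=I_k+E$ with $E$ non-negative and concludes $\mathrm{tr}(U'^{\T}U'G)=\mathrm{tr}(G)+\mathrm{tr}(EG)\geq\mathrm{tr}(G)$, which is exactly your decomposition into the diagonal part $\sum_i G_{ii}$ plus the non-negative off-diagonal sum $\sum_{i\neq j}M_{ij}G_{ji}$. Your version is slightly more explicit in flagging that the non-negativity of the off-diagonal Gram entries $M_{ij}$ relies on $U'\geq 0$ from the algorithmic context, a point the paper leaves implicit when it simply asserts $E\geq 0$.
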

\begin{proof}
Write $U'^{\T}U'=I_{k} + E$ for some non-negative matrix $E$.  Since $E$ and $G$ are non-negative, then $\mathrm{tr}(U'^{\T}U'G)=\mathrm{tr}(I_{k}G+EG)\geq \mathrm{tr}(I_{k}G)$
\end{proof}

\begin{lemma}\label{lem:sym-form}
$U^{\T}Q=U^{\T}T^{\T}US$ is symmetric under the update rule of Algorithm \ref{alg:adaptive-MUR}.
\end{lemma}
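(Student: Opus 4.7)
The plan is to exploit the special form of the $S$-update in Algorithm~\ref{alg:adaptive-MUR}, which is the direct assignment $S \leftarrow U^{\T}TU$ (as opposed to the multiplicative rule in Algorithm~\ref{alg:MUR-fix-lambda}). This means that after every iteration of Algorithm~\ref{alg:adaptive-MUR}, the identity $S = U^{\T}TU$ holds exactly, and I will use this identity to rewrite $U^{\T}Q$ in a form whose symmetry can be read off directly.

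First, I would observe that the input $T$ is skew-symmetric by the problem setup, i.e., $T^{\T} = -T$. Substituting the identity $S = U^{\T}TU$ from the $S$-update into the definition $Q = T^{\T}US$, I would compute
\begin{equation*}
U^{\T}Q \;=\; U^{\T}T^{\T}US \;=\; U^{\T}T^{\T}U\,(U^{\T}TU) \;=\; -(U^{\T}TU)(U^{\T}TU) \;=\; -S^{2},
\end{equation*}
where the third equality uses $U^{\T}T^{\T}U = -U^{\T}TU = -S$. As a by-product, the same calculation shows that $S$ itself is skew-symmetric after each update, since $S^{\T} = U^{\T}T^{\T}U = -U^{\T}TU = -S$.

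Finally I would verify symmetry of $-S^{2}$ using this skew-symmetry:
\begin{equation*}
(-S^{2})^{\T} \;=\; -(S^{\T})^{2} \;=\; -(-S)(-S) \;=\; -S^{2},
\end{equation*}
so $U^{\T}Q = -S^{2}$ is symmetric, as claimed. Since there are essentially no hard estimates—only a chain of matrix identities built from skew-symmetry of $T$ and the $S$-update rule—the only subtle point is making clear that the claim applies after an $S$-update has been performed (the initialization $S$ is only required to be skew-symmetric, not to equal $U^{\T}TU$). This small bookkeeping remark is the main thing worth stating explicitly; the algebra itself is immediate.
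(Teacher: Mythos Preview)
Your proof is correct and follows essentially the same route as the paper: both hinge on the direct assignment $S = U^{\T}TU$ in Algorithm~\ref{alg:adaptive-MUR}, from which $U^{\T}T^{\T}U = S^{\T}$, so $U^{\T}Q = S^{\T}S$ (equivalently your $-S^{2}$, since $S^{\T}=-S$). The paper's version is marginally slicker because $S^{\T}S$ is manifestly symmetric without a further computation, but the content is identical, and your explicit remark that the claim applies after an $S$-update is a useful clarification.
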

\begin{proof}
Under the update rule in Algorithm \ref{alg:adaptive-MUR}, as $U$ is column-wise normalized, $U^{\T}T^{\T}U=S^{\T}$. Hence, $U^{\T}Q=S^{\T}S$ is symmetric.
\end{proof}
It is worth note that, the original scheme proposed in \cite{ding2006orthogonal}, without normalization does not have such property. Assume the norm for each row of $U$ is $D$, where normalized $\tilde{U}D = U$. Then the update $\tilde{S}=\tilde{U}^{\T}T\tilde{U}$, where $S=U^{\T}TU=D\tilde{S}D$. Hence, $S^{\T}S=D\tilde{S}^{\T}D\tilde{S}$ is not necessarily symmetric, which violate the auxillary function formulation.

\subsection*{Proof of Lemma \ref{lem:adpative-aux-fun}}
\begin{proof}
The proof is using Lemma \ref{lem:norm-trace}. Due to normalization step, the factor $U$ have unit norm column vectors. Hence, $\mathrm{tr}(U'^{\T}U'U^{\T}Q_{-})\geq \mathrm{tr}(U^{\T}Q_{-})$ and
$$\mathrm{tr}(-2U^{\T}Q_{+}-U^{\T}UP_{-}+U^{\T}U(U^{\T}Q_{+} + P_{-} - U^{\T}Q_{-}) + 2U'^{\T}U'U^{\T}Q_{-})$$
is an upper bound for Eq.~\eqref{eq: adaptive-objective}, where equality hold when $U$ is orthogonal matrix. 
As $U^{\T}Q$ is symmetric by Lemma \ref{lem:sym-form}, 
we can apply Proposition \ref{prop: prop6} and have 
$$\mathrm{tr}(U^{\T}U(U^{\T}Q_{+} + P_{-} - U^{\T}Q_{-}))\leq \sum_{ij}\frac{[U'(U'^{\T}Q_{+} + P_{-} - U'^{\T}Q_{-})]U_{ij}^2}{U'_{ij}}$$. 
We also have 
$$\mathrm{tr}(U^{\T}Q_{-}U'^{\T}U')\leq \sum_{ij}\frac{U_{ij}^2 + {U'_{ij}}^2}{2U'_{ij}} (U'U'^{\T}Q_{-})_{ij}$$. Combining both term, the result follows.
\end{proof}

We assume $\Lambda + P_{+}\geq 0$. The KKT condition on the orthogonal constraint case can be applied to choose the optimum regularization term $\Lambda$. The KKT condition reads:
\begin{equation}\label{eq:kkt}
2[-Q_{+}-UP_{-}+UP_{+}+Q_{-}+U\Lambda]_{ij}U_{ij}=0.
\end{equation}
For diagonal terms, we sum over $j$ in Eq.~\eqref{eq:kkt} to have 
$[-U^{\T}Q_{+}-U^{\T}UP_{-}+U^{\T}UP_{+}+U^{\T}Q_{-}+U^{\T}U\Lambda]_{ii}=0$, which implies $\Lambda_{kk}=[U^{\T}Q_{+}+P_{-}-P_{+}-U^{\T}Q_{-}]_{kk}$. For off diagonal terms $j\neq p$, $\sum_{k}[\Lambda+P]_{ik}U_{jk}=Q_{ij}$, multiply $U_{ip}$ and sum over $p$ on both sides, we get $\sum_{k}[\Lambda+P]_{pk}=[\Lambda+P]_{jp}=[U^{\T}Q]_{jp}$. Hence we have:
\begin{equation}\label{eq:Lambda-val}
\Lambda=U^{\T}Q-P=U^{\T}Q_{+}+P_{-}-U^{\T}Q_{-}-P_{+}
\end{equation}
with $\Lambda + P_{+}\geq 0$.
\subsection*{Proof of Theorem \ref{thm:adaptive-non-increasing}}

\begin{proof}
Applying KKT condition and choosing adaptive $\Lambda=U^{\T}Q - P$, the objective has the form in Eq.~\eqref{eq: adaptive-objective}, which is bounded by Eq.~\eqref{eq:adaptive_aux} in Lemma \ref{lem:adpative-aux-fun}. Differentiate Eq.~\eqref{eq:adaptive_aux} w.r.t. $U_{ij}$:
\begin{eqnarray*}
2\left([-Q_{+}-U'P_{-}]_{ij}  + \frac{[U'(P_{-}+U'^{\T}Q_{-})]_{ij}U_{ij}}{U'_{ij}}\right)=0
\end{eqnarray*}
Solving the stationary point, we have the update rule for $U$ as stated in Algorithm \ref{alg:MUR-fix-lambda}: $$U_{ij}=U'_{ij}\frac{[Q_{+}+U'P_{-}]_{ij}}{[U'[P_{+}+\Lambda] + Q_{-}]_{ij}}.$$
Since the $U$ factor here does not have unit norm for each column, we explicitly normalized $U$ and update $S=U^{\T}TU$ after normalization. With the normalization step, the optimization scheme in Algorithm \ref{alg:adaptive-MUR} is non-increasing even for the adaptive regularization scheme. Since the objective is bounded below, it converges to the stationary point.
\end{proof}

\end{document}